\newtheorem{thm}{Theorem}[section]
\newtheorem{lem}[thm]{Lemma}
\newtheorem{prop}[thm]{Proposition}
\newtheorem{problem}{Problem}
\def\hbbx{{\hat{\ensuremath{\mathbf x}} }}
\def\hbbz{{\hat{\ensuremath{\mathbf z}} }}
\def\cbbp{{\check{\ensuremath{\mathbf p}} }}
\def\cbbx{{\check{\ensuremath{\mathbf x}} }}
\begin{document}
\title{Controlling a Robotic Stereo Camera Under Image Quantization Noise}
\author{
Charles~Freundlich,~\IEEEmembership{Student~Member,~IEEE,}
Yan~Zhang,~\IEEEmembership{Student~Member,~IEEE,}
Alex~Zihao~Zhu,~\IEEEmembership{Student~Member,~IEEE,}
Philippos~Mordohai,~\IEEEmembership{Member,~IEEE,} and 
Michael~M.~Zavlanos,~\IEEEmembership{Member,~IEEE}
\thanks{Charles Freundlich, Yan Zhang, and Michael M. Zavlanos are with the Dept. of Mechanical Engineering and Materials Science, Duke University, Durham, NC 27708, USA {\tt\footnotesize \{charles.freundlich, yz227, michael.zavlanos\}@duke.edu}.
Philippos Mordohai is with the Dept. of Computer Science, Stevens Institute of Technology, Hoboken, NJ 07030, USA {\tt\footnotesize mordohai@cs.stevens.edu}.
Alex Zihao Zhu is with the Dept. of Computer and Information Science, University of Pennsylvania, Philadelphia, PA, USA {\tt\footnotesize alexzhu@seas.upenn.edu}. This work is supported in part by the National Science Foundation under awards No. CNS-1302284, IIS-1217797, and IIS-1637761.  Preliminary versions of this work can be found in \cite{freundlich13icra,freundlich13cdc}.
} }


\maketitle
\begin{abstract}
In this paper, we address the problem of controlling a mobile stereo camera under image quantization noise.
Assuming that a pair of images of a set of targets is available, the camera moves through a sequence of Next-Best-Views (NBVs), i.e., a sequence of views that minimize the trace of the targets' cumulative state covariance, constructed using a realistic model of the stereo rig that captures image quantization noise and a Kalman Filter (KF) that fuses the observation history with new information.
The proposed algorithm decomposes control into two stages: first the NBV is computed in the camera relative coordinates, and then the camera moves to realize this view in the fixed global coordinate frame.
This decomposition allows the camera to drive to a new pose that effectively realizes the NBV in camera coordinates while satisfying Field-of-View constraints in global coordinates, a task that is particularly challenging using complex sensing models.
We provide simulations and real experiments that illustrate the ability of the proposed mobile camera system to accurately localize sets of targets.
We also propose a novel data-driven technique to characterize unmodeled uncertainty, such as calibration errors, at the pixel level and show that this method ensures stability of the KF.
\end{abstract}

\begin{IEEEkeywords}
Range Sensing,
Motion Control,
Mapping
\end{IEEEkeywords}

\section{Introduction}
\label{sec:intro}
Active robotic sensors are rapidly gaining viability in environmental, defense, and commercial applications.
As a result, developing information-driven sensor strategies has been the focus of intense and growing research in artificial intelligence, control theory, and signal processing. 
We focus on stereoscopic camera rigs, that is, two rigidly connected cameras in a pair.
Specifically, we address the problem of determining the trajectory of a mobile robotic sensor equipped with a stereo camera rig so that it localizes a collection of possibly mobile targets as accurately as possible under image quantization noise.

The advantage of binocular vision, compared to the use of monocular camera systems, is that it provides both depth and bearing measurements of a target from a pair of simultaneous images. 
Assuming that noise is dominated by quantization of pixel coordinates \cite{blostein87,matthies87,chang94} we use the measurement Jacobian to propagate the error from the pixel coordinates to the target coordinates relative to the stereo rig.
In particular, we approximate the pixel error as Gaussian and propagate the noise to the target locations, giving rise to fully correlated second order error statistics, or measurement error covariance matrices, which capture target location uncertainty.
The resulting second order statistic is an accurate representation of not only the eigenvalues but also the eigenvectors of the measurement error covariance matrices, which play a critical role in active sensing as they determine viewing directions from where localization uncertainty can be further decreased.

Assuming that a pair of images of the targets is available, in this paper, we iteratively move the stereo rig through a sequence of configurations that minimize the trace of the targets' cumulative covariance.
This cumulative covariance is constructed using a Kalman Filter (KF) that fuses the observation history with the predicted instantaneous measurement covariance obtained from the proposed stereoscopic sensor model. 
Differentiating this objective with respect to the new instantaneous measurement in the relative camera frame provides the Next Best View (NBV), i.e., the new relative distance and direction from where a new measurement should be obtained. 
Then, the stereo rig moves to realize this NBV using a gradient descent approach in the joint space of camera rotations and translations.  
Once the NBV is realized in the global frame, the camera takes a new pair of images of the targets that are fused with the history using the KF to update the prior cumulative covariance of the targets, and the process repeats with the determination of a new NBV. 
The sequence of observations and resulting NBVs, generated by the proposed iterative scheme, constitutes a switching signal in the continuous motion control.
During motion, appropriate barrier potentials prevent targets from exiting the camera's geometric Field-of-View (FoV).
As we illustrate in computer simulations and real-world experiments, the resulting sensor trajectory balances between reducing range and diversifying viewpoints, a result of the eigenvector information contained in the posterior error covariances. 
This behavior of our controller is notable when compared to existing sensor guidance approaches that adopt approximations to the error covariance that are not direct functions of the stereo rig calibration and the pixel observations themselves
\cite{le1996optimization,passerieux1998optimal,logothetis1997information,stroupe05,zhou_roumeliotis11,olfati2007distributed,chung06}.

\subsection{Related Work}\label{sec_related}
Our work is relevant to a growing body of literature that addresses control for one or several mobile sensors for the purpose of target localization or tracking
\cite{fox00,roumeliotis02,spletzer03,stroupe05,chung06,yang07,morbidi2013active,zhou_roumeliotis11}.
These methods use sensor models that are based only on range and viewing angle.
These models, if used for stereo triangulation, can not accurately capture the covariance among errors in measurement coordinates, nor can they capture dependence on range and viewing angle.
It is also common to ignore directional field of view constraints by assuming omnidirectional sensing.
In this paper, we derive the covariance specifically for triangulation with a calibrated stereo rig.
The derived measurement covariance, when fused with a prior distribution, provides our controller with critical directional information that enables the mobile robot to find the NBV, defined as the vantage point from where new information will reduce the posterior variance of the targets' distribution by the maximum amount.

Recent work \cite{ponda09,Adurthi13,ding2012coordinated} brought about by developments in fixed-wing UAV control, addresses the autonomous visual tracking and localization problem using optimal control over information-based objectives using monocular vision.
\cite{ponda09} define an objective function based on the trace of the covariance matrix of the target location and determine the next best view by a numerical gradient descent scheme. 
\cite{ding2012coordinated} also minimize the trace of the fused covariance by guiding multiple non-holonomic Unmanned Aerial Vehicles (UAVs) that use the Dubins car model.
\cite{Adurthi13} use receding horizon control to maximize mutual information.
Their method avoids replanning at every step by doing so only when the Kullback-Leibler divergence between the most recent target location probability density function (pdf) and the pdf that was used in the planning phase differ by a user-specified threshold.
The Dynamic Programming (DP) approaches of \cite{ponda09,Adurthi13,ding2012coordinated} have complexity that grows exponentially in the horizon length.
In this paper, our proposed analytical and closed-form expression for the gradient guides image collection in all position and orientation directions continuously.
Although it does not plan multiple steps into the future, our controller is adaptive due to its feedback nature; each decision predicts new sensor locations from where new measurements optimize the estimated target locations based on the full, fused observation history.

As far back as \cite{bajcsy88}, computer vision researchers have recognized that sensing decisions should be based on an exact sensor model, and that robotic vision, like human vision, can benefit from mobility.
Relevant prior work on active vision controls the image collection process for digital cameras through a discretized pose space by optimizing a scalar function of the covariance of feature-points on an object that is to be reconstructed.
Specifically, \cite{trummer10} focus on the maximum eigenvalue of the posterior covariance, \cite{Wenhardt06} the entropy, and \cite{dunn_iros09} the expected quality of the next view.
While these works do obtain uncertainty estimates that depend on factors such as viewing distance and camera resolution, which improves accuracy in 3D reconstruction, they do not operate continuously in 3D or consider dynamic environments.
\cite{morbidi2013active} optimize an objective function that depends on the covariance matrix of the KF, rather than the measurement error covariance matrix.  
The authors derive upper and lower bounds on the covariance matrix at steady-state and validate their method in simulation.
Stereoscopic vision sensors in continuous pose space are employed by
\cite{Shade10}, similar to the work proposed here.
However, this work \cite{Shade10} is concerned with exploration of an indoor environment and not with refining localization estimates.

When the target configurations can be collectively modeled by a coarse mesh in space, the NBV problem becomes similar to active inspection.
Several researchers have addressed this problem using approximate dynamic programming, by formulating it as coverage path planning.
\cite{galceran2013} provide an overview of coverage problems in mobile robotics, where the goal is to plan sensor paths that ``see'' every point on the surface mesh.
Similarly, \cite{wang2007} propose solving the traveling view path planning problem using approximate integer programming on a network flow model.
\cite{papadopoulos2013} enforce differential constraints for this problem.
\cite{hollinger13} invoke adaptive submodularity, which argues that greedy approaches to measurement acquisition may outperform dynamic programming approaches that do not replan as measurements are acquired.
While relevant to this work from a sensor planning perspective, active inspection methods do not address target localization. 
Moreover, dynamic and integer programming methods tend to be computationally expensive, especially for high dimensional spaces as those resulting from the presence of mobile targets. 
In this paper, we assume that there are no occlusions and, therefore, coverage (or detection) can be obtained if FoV constraints are met.
Moreover, we assume no correspondence errors between images. 
These assumptions allow us to develop a control systems approach to the target localization problem that is based on computationally efficient, analytic, expressions for the camera motion and image collection process, as well as on precise sensor models that can result in more accurate localization.

We note briefly that this paper is based on preliminary results contained in our prior publications \cite{freundlich13icra,freundlich13cdc}.
These early works used simplified versions of the noise model and global controller and lacked experimental validation.

\subsection{Contributions}
Our proposed control decomposition and resulting hybrid scheme possess a number of advantages compared to other methods that control  directly the full non-linear system or resort to dynamic programming for nonmyopic planning.
While these methods can have their own benefits, they also suffer from drawbacks. 
In particular, controlling directly the full non-linear system can be subject to multiple local minima that might be difficult to handle. 
On the other hand, dynamic programming formulations suffer from computational complexity due the size of the resulting state-spaces and often resort to abstract sensor models to help reduce complexity
\cite{logothetis1997information,stroupe05,singh2007simulation,logothetis1998comparison,frew2003trajectory,Adurthi13,ding2012coordinated}.
Additionally, these approaches use discrete methods, e.g., the exhaustive search of \cite{frew2003trajectory} and the gradient approximations of \cite{singh2007simulation}, to achieve the desired control task. 
Instead, decomposing control in the global and relative frames allows us to consider separately high-level planning, defined by the image collection/sensing process, and low-level planning, i.e., motion control of the camera. 
An advantage of this decomposition is that, given an NBV in the relative frame, there are infinite ways that the camera can realize this NBV in the global frame.
This provides choices to the motion controller that otherwise could be subject to local stationary points due to the nonlinear coupling between sensing and planning.
We provide a stability proof of the motion controller, while extensive computer simulations and experimental results have shown that even when FoV constraints are considered, local minima are not an issue and can be avoided by simple tuning of a gain parameter. 
The control decomposition also allows us to introduce Field-of-View (FoV) constraints that have not been previously used in the NBV context due to the complexity of their implementations. 
Most authors have used omnidirectional sensor models to circumvent these difficulties. 
The FoV constraints naturally enter the motion controllers when control is decomposed in the global and relative frames. 
Finally, to the best of our knowledge, the approaches by
\cite{stroupe05, zhou_roumeliotis11,olfati2007distributed,Adurthi13,ding2012coordinated}
rely on having large numbers of sensors, e.g., 20 or 60, and consider a single target, while our method enables one sensor to track multiple targets as long as they satisfy the FoV constraints.

In summary, the contribution of this work is that we address the multi-target, single-sensor problem employing the most realistic sensor model among continuous-space approaches in the literature that rely on the gradient of an optimality metric of the error covariance for planning.
Additionally, to the best of our knowledge, this work is the first to include FoV constraints within the NBV setting.
We also model image quantization noise directly.
This allows us to accurately model the second order error statistics of the target location uncertainty based on the actual pixel error distribution.
While other sources of error, such as association (matching) errors or occlusion, contribute to target localization error, a simultaneous and exact treatment of all error sources for the purposes of active sensing is an open problem.
In this work we have addressed an essential contributor, that is, quantization.
We have also proposed a novel data-driven technique to account for unmodeled uncertainty, such as system calibration errors, that is necessary to transition the proposed theoretical results to practice.
In particular, for long range stereo vision, calibration errors are unique to the particular stereo rig used.
They can cause severe bias and ill conditioned covariance matrices that may be completely different from one stereo rig to another.
As our method heavily relies on the KF, ensuring that measurements are unbiased and that we have a reliable estimate for their covariance is crucial to both convergence of the estimator and for generating sensible closed-loop robot trajectories.
Our proposed data-driven technique corrects measurements at the pixel level and empirically calculates their predictive error covariances.
Specifically, using a sufficiently large training set of stereo image pairs, we determine the empirical error covariance, which we propagate to world coordinates and use for both path planning and state estimation.
To the best of our knowledge, our data-driven approach to estimating the error statistics in the pixel coordinates is novel.
Most relevant literature in stereo vision assumes arbitrarily large such statistics, so that the estimation process is stable.
In practice, we found this step to be crucial for accurate triangulation and fusion of multiple measurements, even in our controlled lab environment.

We note that this paper is based on preliminary results that can be found in \cite{freundlich13icra,freundlich13cdc}.
The main differences between our preliminary work in \cite{freundlich13icra,freundlich13cdc} and the work proposed here are the following.
First, here we present thorough experimental results that validate our approach; the first of their kind for stereo vision.
Second, the controller proposed in \cite{freundlich13icra,freundlich13cdc} realizes a NBV that does not place the targets on the positive $z$-axis (viewing direction) of the stereo rig.
Looking straight at the targets results in more accurate observations.
The controller proposed here has this property.
As a result, the correctness proofs in this paper are different compared to \cite{freundlich13icra,freundlich13cdc}.
Finally, the noise model in this paper is based on an empirical model of quantization noise in stereo vision, as opposed to constant pixel noise covariance in \cite{freundlich13icra,freundlich13cdc}.

The paper is organized as follows.
Section~\ref{sec_problem} formulates the visual target tracking problem.  
Section~\ref{sec:potdes} discusses the NBV in the camera-relative coordinate system.  
Section~\ref{sec:realize} presents the gradient flow in the global coordinate frame.  
Section~\ref{sec_simulations} illustrates the proposed integrated hybrid system via computer simulations for static and mobile target localization and discusses ways to integrate FoV constraints in the proposed controllers. 
Section~\ref{sec:exp} gives experimental validation of our claims and describes the data-driven noise modeling strategy.
Section~\ref{sec_conclusions} concludes the work.

\section{System Model}\label{sec_problem}
Consider a group of $n$ mobile targets, indexed by $i \in \ccalN = \{1 \dots n \}$, with initially unknown positions $\bbx_i$.  Consider also a mobile stereo camera located at $\bbr(t) \in \reals^3$ and with orientation $R(t) \in SO(3)$, where $SO(3)$ denotes the special orthogonal group of dimension three, with respect to a fixed global reference frame at time $t\geq 0$.  A coordinate frame anchored to the stereo camera, hereafter referred to as the relative coordinates, is oriented such that, without loss of generality, the $x$-axis joins the centers of two monocular cameras and the positive $z$-axis measures range.  We denote the two cameras by Left (L) and Right (R).  The (L) and (R) camera centers are thus located at $(-b/2,0,0)$ and $(b/2,0,0)$ in the relative coordinates, where $b$ denotes the stereo baseline (see Fig.~\ref{fig:y_cord}).

\begin{figure}[t]
 \centering
 \includegraphics[width=7cm]{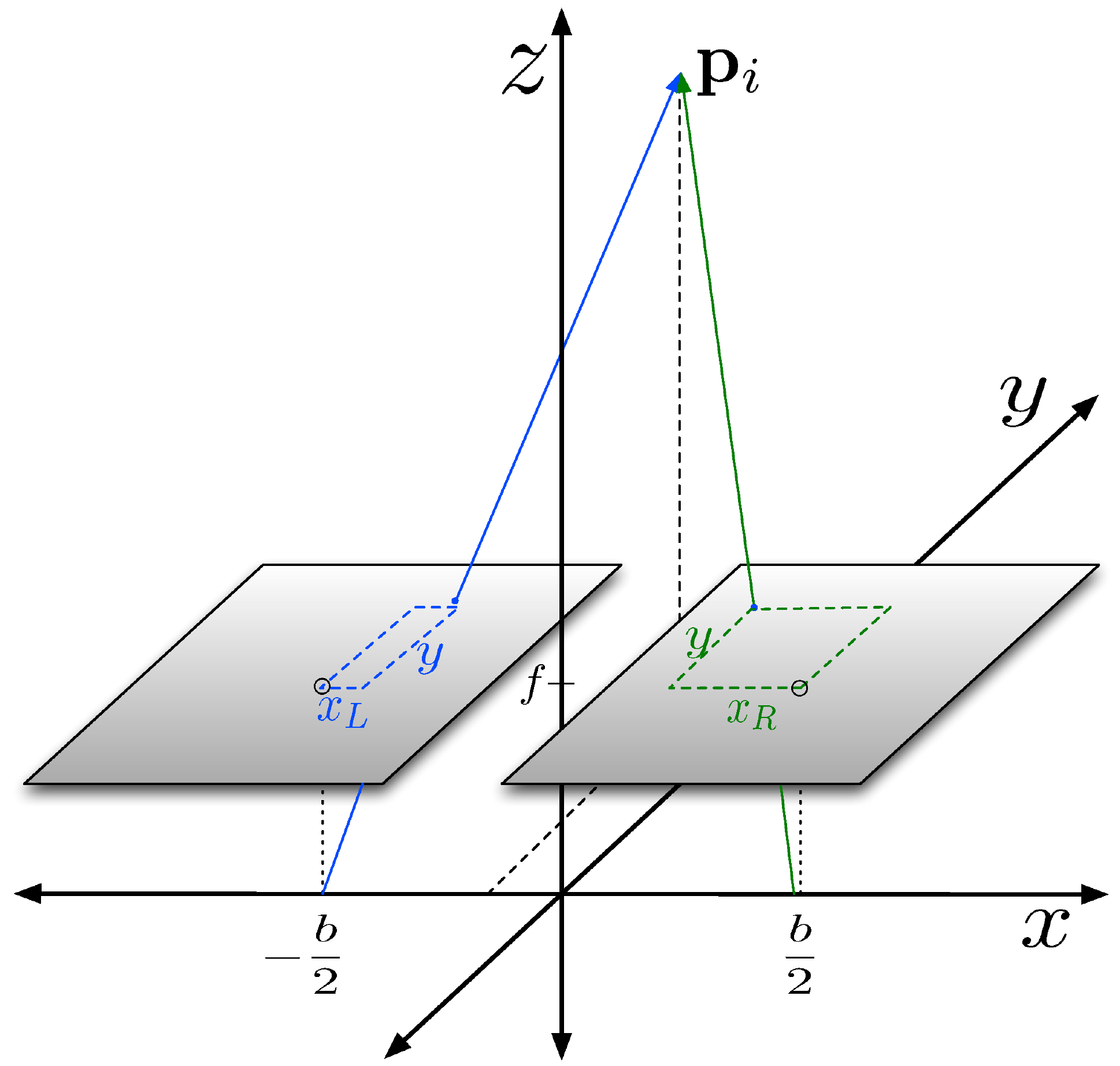}
\caption{Stereo geometry in 3D.  Two rays from the camera centers to a target located at $\bbp_i$ creates a pair of image coordinates, $(x_L,y)$ and $(x_R,y)$.}
 \label{fig:y_cord}
 \end{figure}

The position of target $i$ with respect to the relative camera frame can be expressed as
\begin{equation} \label{eq:bbp}
\bbp_i\triangleq \bbp(x_{Li},x_{Ri},y_i) = \frac{b}{x_{Li} - x_{Ri}} \begin{bmatrix}\frac{1}{2} (x_{Li}+x_{Ri}) \\y_i \\f \end{bmatrix},
\end{equation}
where $f$ denotes the focal length of the camera lens, measured in pixels, and $x_{Li}$, $x_{Ri}$, and $y_i$ denote the pixel coordinates of target $i$ on the left and right camera images, as in Fig.~\ref{fig:y_cord}, where we note that $y_i$ is equal in the left and right image by the epipolar constraint.
Given the orientation and position of the mobile camera, it is useful to consider the location of target $i$ in global coordinates
\begin{equation}\label{eq:hatx}
\bbx_{i}\triangleq R(t) {\bbp}_{i}+ \bbr(t).
\end{equation}

In practice, we can only observe quantized versions of the image coordinate tuples $(x_{Li},x_{Ri},y_{i})$ once they are rounded to the nearest pixel centers, which we hereafter denote by $\chkx_{Li}$, $\chkx_{Ri}$, and $\chky_{i}$.
In view of \eqref{eq:bbp}, the corrupted observation $(\chkx_{Li},\chkx_{Ri},\chky_{i})$ carries its quantization error into the observed coordinates $\bbp_i$ of target $i$, causing non-Gaussian error distributions \cite{blostein87,chang94}.  
For convenience, we follow \cite{matthies87,foerstner05} and approximate the quantized error in the pixels as Gaussian to allow uncertainty propagation from image to world coordinates.
The noise propagation takes place under a linearization of the measurement equation, so that the localization error of the target in the relative camera frame will also be Gaussian with mean $\cbbp_i=\bbp(\chkx_{Li},\chkx_{Ri},\chky_{i})$.
It follows from \eqref{eq:hatx} that the global location estimate $\cbbx_i$ is also subject to Gaussian noise.

Targets may be mobile, so we denote the ground truth full state of target $i$ by
$\bbz_i = [\bbx_i^\top  \; \dot{\bbx}_i^\top  \; \ddot{\bbx}_i^\top ]^\top \!\!$.  Then, $\bbx_i$ and $\bbz_i$ are related by $\bbx_i = H \bbz_i$, where $H = [1 \; 0 \; 0] \otimes I_3$, where $\otimes$ represents the Kronecker product.  Thus, we can think of $\cbbx_{i}$ as a noisy copy of the zero-th order terms of $\bbz_i$,
\begin{equation}\label{eq:lin_obs}
\cbbx_{i} = H \bbz_{i} + \bbv_{i},
\end{equation}
where $\bbv_{i}$ is a white noise vector.  We hereafter denote the covariance of $\bbv_{i}$ by $\Sigma_i\in \mbS^3_+$, where $\mbS^3_+$ denotes the set of $3\times3$ symmetric positive definite matrices.  In Section~\ref{sec:potdes}, we discuss an explicit form of $\Sigma_i$ that depends on the measurement itself.

\subsection{Kalman Filtering to Fuse the Target Observations}
Assume that the stereo camera has made a sequence of observations of the targets.  Introduce an index $k\geq 0$ associated with every observation to obtain $\cbbx_{i,k}$ and associated covariances $\Sigma_{i,k}$ from \eqref{eq:lin_obs}.  Our goal is to create accurate state information for a group of targets based on a sequence of such observations.  For this, we use a Kalman filter (KF), which is an efficient filter that can incorporate a sequence of noisy measurements within a system model to create accurate state estimates. 

We model the continuous time evolution of target $i$'s motion with the discrete time linear equation
\begin{equation}\label{dis_time_eq}
{\bbz}_{i,k} = \Phi {\bbz}_{i,k-1}.
\end{equation}
In \eqref{dis_time_eq}, $\Phi$ is the state transition matrix, which is unknown to the observer.
Adaptive procedures for determining $\Phi$ are well studied in the literature on mobile target tracking
\cite{Singer70,Li03_part1}.
Zero velocity and constant acceleration models of the target trajectory, which we discuss in Section~\ref{sec_simulations}, are modeled over a short time interval $Dt$ by
\begin{subequations}
 \label{eq:genmod}
\begin{align}
\Phi_{\dot{\bbx}_i={\bf 0}} = \begin{bmatrix}	 1 		&  	0		&	0\\
							0 	&	0		&	0\\
							0 	&	0 		&	0\end{bmatrix}\otimes I_3\; \text{and} \\
\Phi_{\dddot{\bbx}_i={\bf 0}} = \begin{bmatrix}	 1 		&  	Dt 		&	\frac{Dt^2}{2}  \\
								0 	&	1		&	Dt \\
								0 	&	0 	&	1 \end{bmatrix}\otimes I_3,
\end{align}
\end{subequations}
The KF recursively creates state estimates, which we denote by $\hat{\bbz}_i$, and their associated covariances, which we denote by $U_i$.   In particular, given prior estimates $\hat{\bbz}_{i,k-1 | k-1}$ and $U_{i,k-1 | k-1}$, we update the state estimates and fuse the covariance matrices according to the following KF:
\begin{subequations} \label{eq:KFeqs}
\begin{align}
\hat{\bbz}_{i,k | k-1}& = \Phi \hat{\bbz}_{i,k-1 | k-1} \label{eq:zpred}\\
U_{i,k| k-1}  &= \Phi U_{i,k-1| k-1} \Phi^\top +W \\
K_k &= U_{i,k| k-1} H^\top \left[H U_{i,k| k-1}H^\top +\Sigma_{i,k}\right]^{-1}\!\!\!\! \label{eq:Kgain} \\
\hat{\bbz}_{i,k | k} &= \hat{\bbz}_{i,k | k-1} + K_k\left[\cbbx_{i,k}-H \hat{\bbz}_{i,k | k-1}\right]\\
U_{i,k| k}  &= U_{i,k| k-1} - K_k HU_{i,k| k-1}   \label{eq:covfinal}
\end{align}
\end{subequations}
where $W$ is the process noise covariance matrix and accounts for the approximate nature of $\Phi$.  
\cite{Singer70} gives a closed form for this matrix,
\begin{equation}\label{eq:processnoise}
W =  \begin{bmatrix} 	Dt^5/20	&	Dt^4/8		&	Dt^3/6		\\
							Dt^4/8		&	Dt^3/3		&	Dt^2/2		\\
							Dt^3/6		&	Dt^2/2		&	Dt		
\end{bmatrix}\otimes I_3.
\end{equation}
From equation \eqref{eq:covfinal} and the results of \cite{welch01}, a closed form expression for the fused covariance estimate follows in the form of a Lemma.
\begin{lem} \label{lem:kfchange}
Let $U_{i,k|k-1}$ denote the predicted covariance of all prior observations and $\Sigma_{i,k}$ denote the covariance of the most recent measurement.  Then, the location estimate of target $i$,  $H \hat{\bbz}_{i,k | k}$, has a covariance matrix, which we hereafter denote by $\Xi_{i,k}$, given by
\begin{equation}\label{eq:Xidef}
\Xi_{i,k} \triangleq HU_{i,k|k}H^\top  = \left[\left(HU_{i,k|k-1}H^\top \right)^{-1}+\Sigma_{i,k}^{-1}\right]^{-1}\!\!\!\!.
\end{equation}
\end{lem}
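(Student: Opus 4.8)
The plan is to work directly from the covariance recursion \eqref{eq:covfinal} and the gain \eqref{eq:Kgain}, projecting everything onto the position block through $H$. The one structural point to keep in mind is that $H=[1\;0\;0]\otimes I_3$ is $3\times 9$ and hence not invertible, so the familiar information-form identity $(U_{i,k|k})^{-1}=(U_{i,k|k-1})^{-1}+H^\top\Sigma_{i,k}^{-1}H$ cannot be post-processed to read off the $3\times 3$ projected covariance $HU_{i,k|k}H^\top$ directly. Instead I would carry out the entire argument on the projected quantities, abbreviating $S\triangleq HU_{i,k|k-1}H^\top$, which lives in the same $3\times 3$ space as $\Sigma_{i,k}$.

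First I would substitute \eqref{eq:Kgain} into \eqref{eq:covfinal} and pre- and post-multiply by $H$ and $H^\top$. Using $HU_{i,k|k-1}H^\top=S$ and $HK_kHU_{i,k|k-1}H^\top=S(S+\Sigma_{i,k})^{-1}S$, this yields
\begin{equation*}
\Xi_{i,k}=HU_{i,k|k}H^\top=S-S\left(S+\Sigma_{i,k}\right)^{-1}S.
\end{equation*}

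The remaining task is to recognize the right-hand side. I would invoke the Woodbury matrix identity with $A^{-1}=S$, $C^{-1}=\Sigma_{i,k}$, and $U=V=I_3$, which gives exactly $S-S(S+\Sigma_{i,k})^{-1}S=\left(S^{-1}+\Sigma_{i,k}^{-1}\right)^{-1}$, matching \eqref{eq:Xidef}. Equivalently, one can avoid citing Woodbury by factoring $S-S(S+\Sigma_{i,k})^{-1}S=S(S+\Sigma_{i,k})^{-1}\Sigma_{i,k}$ and then checking the elementary identity $S^{-1}+\Sigma_{i,k}^{-1}=\Sigma_{i,k}^{-1}(S+\Sigma_{i,k})S^{-1}$, whose inverse is the same product. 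For the inverses to exist I would note that $\Sigma_{i,k}\in\mbS^3_+$ by assumption and that $S$ is positive definite because each prediction step adds the process noise $W$ in \eqref{eq:processnoise}, keeping $U_{i,k|k-1}$ (and hence its principal block $S$) nonsingular.

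There is no serious obstacle here: once the computation is projected through $H$, the result is a one-line consequence of the matrix inversion lemma, which is precisely the covariance-to-information equivalence attributed to \cite{welch01}. The only thing that needs care is not short-circuiting the argument by applying the information form to the full nine-dimensional state, since $H$ is rank deficient; handling the projected $3\times 3$ matrices sidesteps this cleanly.
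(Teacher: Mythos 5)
Your proposal is correct and follows essentially the same route as the paper: project the covariance update \eqref{eq:covfinal} through $H$, substitute the gain \eqref{eq:Kgain} to get $\Xi_{i,k}=S-S(S+\Sigma_{i,k})^{-1}S$, and identify this with $\left(S^{-1}+\Sigma_{i,k}^{-1}\right)^{-1}$. The only cosmetic difference is that you cite the Woodbury identity by name, whereas the paper derives that special case inline by the same elementary factorization $S(S+\Sigma_{i,k})^{-1}\Sigma_{i,k}$ that you give as your alternative; your explicit remarks on invertibility and on not applying the information form to the rank-deficient $9$-dimensional state are sound and slightly more careful than the paper's closing comment.
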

\begin{proof}From the definition of $\Xi_{i,k}$ we have that,
\begin{align*}
U_{i,k| k}  &= U_{i,k| k-1} - K_k HU_{i,k| k-1} \\
HU_{i,k| k}  H^\top &=H \left(U_{i,k| k-1} - K_kHU_{i,k| k-1}\right)H^\top =\Xi_{i,k}.
\end{align*}
To simplify the analysis, let $U=U_{i,k| k-1}$ and $\Sigma = \Sigma_{i,k}$.  Substituting the Kalman gain $K_k$ from \eqref{eq:Kgain}, we have
\begin{align*}
\Xi_{i,k} &=HUH^\top   \Big[ I - \big(HUH^\top + \Sigma\big)^{-1} HUH^\top  \Big] \\
&=HUH^\top   \Big[ \left(HUH^\top   + \Sigma\right)^{-1}  \left(HUH^\top   + \Sigma\right) -  \\
&\hspace{3cm}  \left(HUH^\top   + \Sigma\right)^{-1} HUH^\top  \Big]\\
&= HUH^\top \left(HUH^\top   + \Sigma\right)^{-1}\left(HUH^\top \!  +\! \Sigma-HUH^\top \right)\\
&= HUH^\top \left(HUH^\top   + \Sigma\right)^{-1}\Sigma\\
&= \Sigma\left(HUH^\top   + \Sigma\right)^{-1}HUH^\top \\
&= \Big( \left( HUH^\top   \right)^{-1} \left(HUH^\top   + \Sigma\right) \Sigma^{-1} \Big)^{-1}\\
\Xi_{i,k} &= \left[\Sigma^{-1}+ \left(HUH^\top \right)^{-1}\right]^{-1}.
\end{align*}
These manipulations are legal because covariance matrices are positive definite and therefore symmetric and invertible.
\end{proof}

\subsection{The Next Best View Problem}\label{sec:NBVprob}
Suppose there are $k-1$ available observations of the group of targets in $\ccalN$, and let
\begin{equation}\label{eq:s_objective}
HU_{s,k|k-1}H^\top \; \textrm{with} \;\; s= \argmax_{j \in \ccalN} \left\{ {\bf tr}\, \left[HU_{j,k|k-1} H^\top \right] \right\}
\end{equation}
denote the predicted covariance of the worst localized target and
\begin{equation}\label{eq:c_objective}
HU_{c,k|k-1}H^\top  = \frac{1}{n} \sum_{i \in \ccalN} H U_{i,k|k-1}H^\top .
\end{equation}
denote the average of all predicted target covariances at iteration $k$. The problem that we address in this paper is as follows.

\begin{problem}[Next-Best-View]\label{problem}
Given the predicted covariance of the worst localized target $HU_{s,k|k-1}H^\top $ (respectively, the average of the targets'  predicted covariances $HU_{c,k|k-1}H^\top $) and the predicted next location $\hbbz_{s,k | k-1}$ of target $s$ (respectively, the average of the targets' predicted locations $\hbbz_{c,k | k-1}$), determine the next pose of the stereo rig $(\bbr(t_k), R(t_k))$ so that ${\bf tr}[\Xi_{s,k}]$ (respectively, ${\bf tr}[\Xi_{c,k}]$) is minimized.
\end{problem}

To solve Problem~\ref{problem} we make the following assumptions:
\begin{enumerate}
\item[(A1)] noise is dominated by quantization of pixel coordinates;
\item[(A2)] correct correspondence of the targets between the images in the stereo rig exists;
\item[(A3)] if targets are in the field of view of the cameras, then they are not occluded by any obstacle in space.
\end{enumerate}

Assumption~(A1) allows us to isolate, analyze, and control the effect of pixelation noise on the target localization process. While other sources of noise do exist, pixelation noise does in fact dominate for small disparities, e.g., when the camera is far away from the targets. The effect of other noise sources can be critical for the stability of the KF, and we discuss a novel data-driven approach to obtain empirical models of these uncertainties in Section~\ref{sec:exp}.
Assumptions~(A2)~and~(A3) allow us to simplify the problem formulation. It is well known that correspondence and occlusion are both important problems and, being such, have received significant attention in the computer vision literature, see e.g., \cite{scharstein2002} and the references therein. In this paper, assumptions~(A2)~and~(A3) allow us to obtain analytic and computationally efficient solutions to the Next-Best-View and target localization problem using exact models of stereo vision sensors. In situations where correspondence errors and occlusions do not raise significant challenges, e.g., for sparse target configurations, our approach can have significant practical applicability.

In problem~\ref{problem}, we have chosen the trace as a measure of uncertainty among other choices, such as the determinant or the maximum eigenvalue. (A similar choice was made by \cite{ponda09}.) 
\cite{wenhardt07} shows that all such criteria behave similarly in practice. 
Since minimization of ${\bf tr}[\Xi_{s,k}]$ is associated with improving localization of the worst localized target, we call it the \emph{supremum objective}. 
We call minimization of ${\bf tr}[\Xi_{c,k}]$ the \emph{centroid objective}. 
$\Xi_{s,k}$ will depend only on the predicted next position of the worst localized target, which we denote by $\bbp_{s,k}$, but $\Xi_{c,k}$ will depend on the {predicted next} positions $\bbp_{i,k}$ of all $i \in \ccalN$.  

Attempting to solve Problem~\ref{problem} by simultaneously controlling the covariances of all targets requires a nonconvex constraint to maintain consistency between images.
We note that, when we employ the supremum or centroid objective, the decision process comprises two nonlinear procedures: triangulation and Kalman Filtering. 

\section{Controlling the Relative Frame} \label{sec:potdes}
Assume that $k-1$ observations are already available. Our goal in this section is to determine the next best target locations $\bbp_{s,k}$ or $\bbp_{c,k}$ on the relative camera frame so that if a new observation is made with the targets at these new relative locations, the fused localization uncertainty, which is captured by $\Xi_{s,k}$ or $\Xi_{c,k}$, is optimized. For this, we need to express the instantaneous covariance $\Sigma_{i,k}$ of target $i$ as a function of the relative position $\bbp_{i,k}$.
To simplify notation, in this section we drop the subscripts $s, c,$ and $o$.
We will also drop the subscript $k$ when no confusion can occur. 

From \eqref{eq:bbp}, we know that $\bbp$ depends on the noisy vector $(\chkx_L,\chkx_R,\chky)$, which we assume has some known or easily estimated covariance $Q$.
\footnote{Recall that we approximate the uniform pixelation noise as Gaussian, hence the approximate nature of $Q$.}  
In the experiments of Section~\ref{sec:exp}, we propose a new data-driven linear model to estimate $Q$.
Let $J$ be the Jacobian of $\bbp\triangleq \bbp(x_L,x_R,y)$ evaluated at the point $(\chkx_L,\chkx_R,\chky)$, given by
\begin{equation} \label{eq:jac}
J =\frac{b}{(\chkx_L-\chkx_R)^{2}} \begin{bmatrix}  -\chkx_R & \chkx_L & 0 \\ -\chky & \chky & \chkx_L-\chkx_R \\ -f & f & 0 \end{bmatrix}.
\end{equation}
Then, the first order (linear) approximation of $\bbp $ about the point $(\chkx_L,\chkx_R,\chky)$ is
\begin{equation} \label{eq:linearization}
\bbp(x_L,x_R,y) \approx \bbp(\chkx_L,\chkx_R,\chky) + J  \begin{bmatrix} \chkx_L-x_L \\ \chkx_R-x_R \\ \chky-y \end{bmatrix} \!\!.
\end{equation}
Since $\bbp (\chkx_L,\chkx_R,\chky)$ corresponds to the current mean estimate of target coordinates, it is constant in \eqref{eq:linearization}.  Therefore, the covariance of $\bbp $ in the relative camera frame is $J QJ ^\top $.
Fusing covariance matrices as in Lemma~\ref{lem:kfchange} requires that they are  represented in the same coordinate system. To represent the covariance $J QJ ^\top $ in global coordinates, we need to rotate it by an amount corresponding to the camera's orientation at the time this covariance is evaluated. Assuming that consecutive observations are close in space, so that the camera makes a small motion during the time interval $[t_{k-1},t_{k}]$, we may approximate the camera's rotation $R(t)$ at time $t\in[t_{k-1},t_{k}]$ by its initial rotation $R(t_{k-1})$. 
We note that this approximation will be inaccurate if the robot moves long distances between consecutive observations.
For the use case discussed in Section~\ref{sec:exp}, the robot takes multiple observations per second, so this approximation is not an issue.
Denoting $R(t_{k-1})$ by $R$, the instantaneous covariance of $\bbp $ can be approximated by
\begin{equation} \label{eq:instcov}
\Sigma  = \cov[\bbp(\chkx_L,\chkx_R,\chky)]  \approx  RJ QJ ^\top R^\top  \!\!.
\end{equation}
In view of \eqref{eq:bbp} and \eqref{eq:jac}, the covariance in \eqref{eq:instcov} is clearly a function of the target coordinates on the relative image frame.
Using this model of measurement covariance, we define the uncertainty potential
\begin{equation} \label{eq:h}
h(\bbp) = {\bf tr}\big\{ \Xi \big\},
\end{equation}
Then, the next best view vector that minimizes $h$ can be obtained using the gradient descent update
\begin{equation}\label{eqn_p_update}
 \bbp_{k} =\bbp_{k-1} -   K \int_{0}^{T} \nabla_{\bbp} h\left(\bbp (\tau)\right) d\tau,
\end{equation} where $K$ is a gain matrix.
The length $T>0$ of the integration interval is chosen so 
that the distance between $\bbp_{k}$ and $\bbp_{k+1}$ is less than the maximum distance the robot can travel before another NBV is calculated at time $t_{k}$.
The following result provides an analytical expression for the gradient of the potential $h$ in \eqref{eqn_p_update}.
\begin{prop} \label{prop:grad}
The $j$-th coordinate of the gradient of $h$ with respect to $\bbp $ is given by
\begin{equation} \label{eq:gradient f_h}
\frac{\partial h}{\partial [\bbp ]_j}  ={\bf tr} \left\{\Sigma ^{-1} \Xi^{2}\Sigma ^{-1}  \frac{\partial \Sigma }{\partial  \left[\bbp \right]_j}  \right\},
\end{equation}
where $\frac{\partial \Sigma }{\partial \left[ \bbp \right]_j}$ is the partial derivative of $\Sigma $ with respect to the $j$-th coordinate of $\bbp $, and $j=x,y,z$, corresponds to the three dimensions of $\bbp $.
\end{prop}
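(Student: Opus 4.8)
The plan is to treat $h(\bbp) = {\bf tr}\{\Xi\}$ as a scalar function of the single real parameter $\theta = [\bbp]_j$ and to differentiate through the closed form for $\Xi$ supplied by Lemma~\ref{lem:kfchange}. Writing $A = (HU_{i,k|k-1}H^\top)^{-1}$, that lemma gives $\Xi = (A + \Sigma^{-1})^{-1}$. The first observation I would make is that $A$ is \emph{constant} with respect to $\bbp$: the predicted covariance $U_{i,k|k-1}$ is assembled from the prior observation history and the motion model, neither of which involves the next relative target location $\bbp$ at which the forthcoming measurement is to be taken. Consequently all of the $\bbp$-dependence of $\Xi$ is funnelled through $\Sigma$, and I only need to track how $\Sigma$ enters.

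The computation then rests on two standard matrix-calculus facts, both legitimate here because, as already noted in the proof of Lemma~\ref{lem:kfchange}, $\Sigma$ and $\Xi$ are positive definite and hence invertible. First, since the trace is linear, $\partial h/\partial\theta = {\bf tr}\{\partial\Xi/\partial\theta\}$. Second, for any differentiable invertible matrix-valued map $M(\theta)$ one has $\partial M^{-1}/\partial\theta = -M^{-1}(\partial M/\partial\theta) M^{-1}$. Applying this identity to $\Xi = (A+\Sigma^{-1})^{-1}$ and using that $A$ is constant gives
\[
\frac{\partial \Xi}{\partial\theta} = -\Xi\,\frac{\partial \Sigma^{-1}}{\partial\theta}\,\Xi,
\]
and applying the same identity once more to $\Sigma^{-1}$ yields $\partial\Sigma^{-1}/\partial\theta = -\Sigma^{-1}(\partial\Sigma/\partial\theta)\Sigma^{-1}$. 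Substituting the second expression into the first cancels the two minus signs and produces
\[
\frac{\partial \Xi}{\partial\theta} = \Xi\,\Sigma^{-1}\,\frac{\partial \Sigma}{\partial\theta}\,\Sigma^{-1}\,\Xi.
\]

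Finally I would take the trace and invoke its cyclic invariance to bring the factors into the advertised order: moving the trailing $\Xi$ to the front gives ${\bf tr}\{\Xi^{2}\Sigma^{-1}(\partial\Sigma/\partial\theta)\Sigma^{-1}\}$, and a second cyclic shift of the trailing $\Sigma^{-1}$ delivers exactly ${\bf tr}\{\Sigma^{-1}\Xi^{2}\Sigma^{-1}(\partial\Sigma/\partial\theta)\}$, which is \eqref{eq:gradient f_h} with $\theta = [\bbp]_j$. I do not expect any deep obstacle here; the proof is essentially routine matrix calculus. The one conceptual point requiring care is justifying that $U_{i,k|k-1}$ carries no dependence on $\bbp$, so that $A$ may be frozen, and the one mechanical point is keeping the bookkeeping of the two nested applications of the inverse-derivative rule straight, so that the signs and the left/right placement of the $\Xi$ and $\Sigma^{-1}$ factors come out correctly. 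The factor $\partial\Sigma/\partial\theta$ itself is left symbolic in \eqref{eq:gradient f_h}; it would be obtained by differentiating \eqref{eq:instcov} together with \eqref{eq:bbp} and \eqref{eq:jac}, a lengthy but entirely mechanical calculation that does not affect the structure of the result.
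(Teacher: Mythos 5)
Your proof is correct and follows essentially the same route as the paper: both exploit the closed form $\Xi = \left[(HUH^\top)^{-1} + \Sigma^{-1}\right]^{-1}$ from Lemma~\ref{lem:kfchange}, note that the prior term $(HUH^\top)^{-1}$ is constant with respect to $\bbp$, apply the matrix inverse-derivative identity twice (once to $\Xi$, once to $\Sigma^{-1}$), and finish with cyclic invariance of the trace. The only cosmetic difference is that the paper keeps both terms in an intermediate expression before killing the constant one, and it additionally spells out the chain-rule evaluation of $\partial\Sigma/\partial[\bbp]_j$ via \eqref{eq:jac_deriv}--\eqref{eq:invbbp}, which you correctly identify as mechanical and leave symbolic, as the statement itself does.
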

It is not difficult to show that
\begin{equation}\label{eq:grad_mid}
\frac{\partial h}{\partial [\bbp ]_j} = -{\bf tr} \left\{ \Xi ^2 \frac{\partial \left[ ( HU H^\top  )^{-1} + \Sigma ^{-1} \right]}{\partial \left[\bbp \right]_j}  \right\}.
\end{equation}
Note that the covariance of all prior fused measurements $HU H^\top $ is a constant with respect to the next best view $\bbp $ and, therefore, its derivative with respect to $\bbp $ is zero, i.e., $\partial \left(HU H^\top \right)^{-1} / \partial\left[ \bbp  \right]_j = 0$.
The derivative of $\Sigma ^{-1}$ with respect to $\left[ \bbp  \right]_j$ leads to an expression for the derivative in the right-hand-side of \eqref{eq:grad_mid} that retreives \eqref{eq:gradient f_h}.

In what follows, we apply the chain rule to calculate $\partial \Sigma  / \partial \left[\bbp \right]_j$ in \eqref{eq:gradient f_h}.  In particular,
since we hold $R$ constant during the relative update, we have that the partial derivatives of $\Sigma $ in the directions $[\bbp ]_j$ for $j=x,y,z$ are taken only with respect to the entries of $J Q J^\top $, i.e.,
\begin{equation}\label{eq:jac_deriv}
\frac{\partial \Sigma }{\partial \left[\bbp \right]_j} 
=
R \left( \frac{\partial J}  {\partial\left[ \bbp \right]_j}  QJ^\top + J Q \frac{\partial J^\top}  {\partial\left[ \bbp \right]_j}   \right) R^\top .
\end{equation}
Then, using the chain rule,
\begin{equation}\label{eq:partials}
\frac{\partial J  }{\partial\left[ \bbp \right]_j} =
		\frac{\partial J }{\partial x_L }\frac{ \partial x_L}{ \left[\bbp \right]_j}+
		\frac{\partial J }{\partial x_R }\frac{ \partial x_R}{ \left[\bbp \right]_j}+
		\frac{\partial J}{\partial y  }\frac{ \partial y }{ \left[\bbp \right]_j}.
\end{equation}
The need arises to express the pixel coordinate tuple $(x_L,x_R,y )$ as a function of the location of target in relative coordinates $\bbp $.  
This is available via the inverse of \eqref{eq:bbp}, given by
\begin{equation}\label{eq:invbbp}
\mat{x_L \\ x_R \\ y}
=
\frac{f}{[\bbp]_z} \mat{[\bbp ]_x + \frac{b}{2} \\ [\bbp ]_x - \frac{b}{2} \\ [\bbp ]_y }.
\end{equation}
Then, \eqref{eq:partials} can be evaluated by finding the partial derivative of $J$ with respect to $(x_L,x_R,y )$ and
the partial derivatives of the entries of \eqref{eq:invbbp} with respect to each coordinate of $\bbp $.  Using these derivatives, all terms in \eqref{eq:jac_deriv} are accounted for, which completes the proof of Proposition~\ref{prop:grad}.

\section{Controlling the Global Frame} \label{sec:realize}
The update in \eqref{eqn_p_update} provides the desired change in relative target coordinates $\bbp_{o,k} -\bbp_{o,k-1} $ of target $o$ in the camera frame, where $o$ stands for `objective' and can be either $s$ or $c$, depending on the objective defined in Problem~\ref{problem}.
Our goal in this section is to determine a new camera position $\bbr(t_k)$ and orientation $R(t_k)$ in space that realizes the change in view, effectively arriving at the Next Best View of the target located at $\hbbx_o.$
Transforming the change of view into global coordinates, the goal position $\bbr^*$ is defined as
\begin{equation}
\bbr^* \triangleq \bbr(t_{k-1}) + R (t_{k-1})( \bbp_{o,k} -\bbp_{o,k-1}).
\end{equation}
The ability to rotate the camera in addition to translating it means that there are infinitely many poses in the global frame that realize the NBV in relative coordinates.
The goal orientation is defined to be any pose such that the point $\hbbx_o$ lies on the $z$-axis of the camera relative coordinate system, i.e., the camera is looking straight at the centroid (or supremum) target location.
To achieve this new desired camera position and orientation, we define the following potential which we seek to minimize:
\begin{align}
\label{eq:psi_potential}
\psi  \left(\bbr,R \right) &= 
\overbrace{
\norm{\bbr - \bbr^* }^2}^\text{position}  + \overbrace{\norm{R^\top  \hbbz - \bbe_3}^2}^\text{orientation},
\end{align}
where
\begin{align}\label{eq:unit-vecs}
\hbbz = \frac{ \hbbx_o - \bbr^* }{\norm{\hbbx_o - \bbr^*}} \text{ and }
\bbe_3= \left[0 \; 0 \;  1\right]^\top.
\end{align}
In \eqref{eq:unit-vecs}, $\hbbz$ is the direction in global coordinates from the desired robot position $\bbr^*$ to the estimated target-objective location $\hbbx_o,$
and $\bbe_3$ is the unit vector in the direction of the robot's view in relative coordinates, defined to be the $z$-axis.
Note that the robot and target cannot be located at the same point, because this would violate field of view constraints.

To minimize $\psi$, we define the following gradient flow for all time $t \in [t_{k-1},t_k]$
\begin{subequations}\label{eq:rRdiff}
\begin{align}
\dot{\bbr} &= -\nabla_\bbr {\psi}(\bbr,R)  \label{eq:rdiff} \\
\dot{R} &= -R \nabla_R {\psi}(\bbr,R), \label{eq:Rdiff}
\end{align}
\end{subequations}
in the joint space of camera positions $\reals^3$ and orientations $SO(3)$, where $\nabla_\bbr {\psi}$ and $\nabla_R{\psi}$ are the gradients of $\psi$ with respect to $\bbr$ and $R$. After initializing the gradient flow \eqref{eq:rRdiff} at $\big(\bbr(t_{k-1}),R(t_{k-1})\big)$, the following lemma shows that if $R(t_{k-1})\in SO(3)$ and $R(t)$ evolves as in \eqref{eq:Rdiff} and $\nabla_R {\psi}(\bbr,R)$ is skew-symmetric, then $R(t)\in SO(3)$ for all time $t\in[t_{k-1},t_{k}]$; see \cite{Zavlanos2008}.

\begin{lem} \label{lem:flow}
 Let $\Omega(t) $ be skew-symmetric $\forall \, t \geq 0$ and define the matrix differential equation $\dot{R}(t) = R(t)\Omega(t)$.  Then, $R(t) \in SO(n) \, \forall \, t \geq 0$ if $R(0) \in SO(n)$.
 \end{lem}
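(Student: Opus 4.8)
The plan is to certify membership in $SO(n)$ by verifying its two defining properties separately: that $R(t)$ stays orthogonal, i.e.\ $R(t)^\top R(t)=I$, and that $\det R(t)=1$. Both will follow from conserved-quantity arguments applied to the flow $\dot{R}=R\Omega$ with $\Omega=\Omega(t)$ skew-symmetric.

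For orthogonality, I would track the matrix $P(t)\triangleq R(t)^\top R(t)$. Differentiating and substituting $\dot{R}=R\Omega$ together with $\dot{R}^\top=\Omega^\top R^\top=-\Omega R^\top$ gives
\begin{equation*}
\dot{P}=\dot{R}^\top R + R^\top \dot{R} = -\Omega P + P\Omega.
\end{equation*}
This is a linear matrix ODE for $P$ with continuous time-varying coefficients. The constant map $P(t)\equiv I$ satisfies it, since $-\Omega I + I\Omega = 0$, and it matches the initial condition $P(0)=R(0)^\top R(0)=I$ because $R(0)\in SO(n)$. By uniqueness of solutions to linear ODEs, $P(t)=I$ for all $t\geq 0$, so $R(t)$ is orthogonal throughout.

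For the determinant, once orthogonality is established each $R(t)$ satisfies $\det R(t)\in\{-1,+1\}$. Since $t\mapsto\det R(t)$ is continuous and $\det R(0)=1$, the intermediate value theorem forbids a jump to $-1$, so $\det R(t)=1$ for all $t$. Equivalently, Jacobi's formula gives $\frac{d}{dt}\det R = \det R\cdot{\bf tr}(R^{-1}\dot{R}) = \det R\cdot{\bf tr}(\Omega)=0$, using that the trace of a skew-symmetric matrix vanishes; this shows $\det R$ is constant directly. Combining the two facts yields $R(t)\in SO(n)$ for all $t\geq 0$.

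The main obstacle is the uniqueness step: the conclusion $P\equiv I$ hinges on invoking existence and uniqueness for the linear ODE $\dot{P}=P\Omega-\Omega P$, which requires $\Omega(t)$ to be at least continuous so that the Picard--Lindel\"{o}f theorem applies; I would state this regularity hypothesis explicitly. Everything else is routine computation — the only subtlety worth flagging is the sign bookkeeping from $\Omega^\top=-\Omega$ when differentiating $P$.
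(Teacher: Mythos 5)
Your proof is correct. The paper itself contains no proof of this lemma --- it is stated bare and deferred to the citation \cite{Zavlanos2008} --- so there is no in-paper argument to compare against; your route (uniqueness for the linear ODE $\dot P = P\Omega - \Omega P$ forcing $P = R^\top R \equiv I$, then continuity of $\det R$, or Jacobi's formula, pinning $\det R \equiv 1$) is the standard one such references give. Your explicit flagging of the continuity hypothesis on $\Omega(t)$ required for uniqueness is a genuine point the paper leaves implicit; it is satisfied in the application here, where $\Omega = -\nabla_R\psi$ is smooth along the gradient-flow trajectories.
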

 In other words, the gradient flow \eqref{eq:Rdiff} is implicitly constrained to the set of Special Euclidean transformations during the minimization of $\psi$.

\subsection{Closed Form Motion Controllers}
In the remainder of this section we provide analytic expressions for the gradients in \eqref{eq:rRdiff}. 
We also use these expressions to show that the closed loop system \eqref{eq:rRdiff} minimizes ${\psi}$.
The first proposition identifies the gradient of $\psi$ with respect to $R.$
To prove it, we use the matrix inner product $\langle A,B\rangle={\bf tr}(A^\top B)$, which has the following property.
\begin{lem}\label{lem:skew}
For any square matrix $A$ and skew-symmetric matrix $\Omega$ of appropriate size, 
$2 \left\langle A, \Omega \right\rangle 
=
 \left\langle A - A^\top, \Omega \right\rangle.
$
\end{lem}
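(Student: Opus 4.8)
The plan is to prove the identity by unfolding the definition of the matrix inner product $\langle A, B\rangle = {\bf tr}(A^\top B)$ on both sides and reducing everything to a single fact about traces against skew-symmetric matrices. Starting from the right-hand side, I would expand
$$
\left\langle A - A^\top, \Omega \right\rangle = {\bf tr}\left((A - A^\top)^\top \Omega\right) = {\bf tr}(A^\top \Omega) - {\bf tr}(A \Omega),
$$
using the transpose rule $(A-A^\top)^\top = A^\top - A$ together with linearity of the trace.

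The crux of the argument is then the claim that ${\bf tr}(A\Omega) = -{\bf tr}(A^\top \Omega)$ whenever $\Omega$ is skew-symmetric. First I would invoke the invariance of the trace under transposition, ${\bf tr}(A\Omega) = {\bf tr}\left((A\Omega)^\top\right) = {\bf tr}(\Omega^\top A^\top)$. Substituting the skew-symmetry relation $\Omega^\top = -\Omega$ and then applying the cyclic property of the trace yields ${\bf tr}(\Omega^\top A^\top) = -{\bf tr}(\Omega A^\top) = -{\bf tr}(A^\top \Omega)$, which establishes the desired sign flip.

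Substituting this back, the right-hand side collapses to ${\bf tr}(A^\top \Omega) - \left(-{\bf tr}(A^\top \Omega)\right) = 2\,{\bf tr}(A^\top\Omega) = 2\left\langle A, \Omega\right\rangle$, which is precisely the left-hand side, completing the proof. I do not anticipate any serious obstacle: the entire argument rests on the interplay between the transpose-invariance and cyclic properties of the trace, and the only point requiring care is the bookkeeping of transposes needed to extract the sign change from the skew-symmetry of $\Omega$. Intuitively, the identity simply reflects that pairing against a skew-symmetric matrix annihilates the symmetric part of $A$ and doubles its skew-symmetric part, so everything reduces to routine linear-algebra manipulation.
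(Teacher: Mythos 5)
Your proof is correct and rests on essentially the same ingredients as the paper's: transpose-invariance of the trace, its cyclic property, and the skew-symmetry of $\Omega$. The paper simply runs the computation in the opposite direction, starting from $2\left\langle A, \Omega \right\rangle = \left\langle A, \Omega \right\rangle + \left\langle \Omega, A \right\rangle$ and collapsing the two traces into ${\bf tr}\left((A^\top - A)\Omega\right)$, whereas you expand the right-hand side and isolate the sign-flip identity ${\bf tr}(A\Omega) = -{\bf tr}(A^\top \Omega)$; the two arguments are interchangeable.
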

\begin{proof}
We have that
$
2 \left\langle A, \Omega \right\rangle = \left\langle A, \Omega \right\rangle + \left\langle  \Omega, A \right\rangle = \bbt \bbr (A^\top \Omega + \Omega^\top A)= \bbt \bbr ( (A^\top - A )\Omega)= \left\langle A - A^\top, \Omega \right\rangle .
$
\end{proof}

\begin{prop} \label{lem:dpdR}
The gradient of $\psi$ with respect to $R$ is given by the skew-symmetric matrix
\begin{equation}\label{eq:nabla_psi_R}
\nabla_R\psi=
 R^\top  \hbbz
(R^\top  \hbbz - \bbe_3)^\top
-
(R^\top  \hbbz - \bbe_3 )
   \hbbz^\top R .
\end{equation}
\end{prop}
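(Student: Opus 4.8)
The plan is to compute $\nabla_R\psi$ as the Riemannian gradient of $\psi$ on $SO(3)$, using the fact that among the two terms of \eqref{eq:psi_potential} only the orientation term $\norm{R^\top\hbbz - \bbe_3}^2$ depends on $R$: the position term is constant in $R$, and both $\hbbz$ and $\bbr^*$ in \eqref{eq:unit-vecs} are fixed by $\hbbx_o$ and $R(t_{k-1})$, hence do not vary along the flow. The conceptual point I would emphasize first is that the object called $\nabla_R\psi$ in \eqref{eq:Rdiff} is not the ambient matrix derivative but the (skew-symmetric) matrix paired with $\Omega$ in the identity $\frac{d}{dt}\psi = \langle\nabla_R\psi,\Omega\rangle$ evaluated along admissible curves $\dot R = R\Omega$ with $\Omega$ skew-symmetric. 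This is precisely the notion of gradient compatible with Lemma~\ref{lem:flow}, which is what keeps $R(t)\in SO(3)$.

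First I would set $u = R^\top\hbbz - \bbe_3$ and write the orientation term as $u^\top u$. Differentiating along $\dot R = R\Omega$ gives $\dot u = \Omega^\top R^\top\hbbz$, and hence $\frac{d}{dt}\psi = 2u^\top\Omega^\top R^\top\hbbz$. Since this is a scalar, I would transpose it and apply the cyclic property of the trace to rewrite it as $\frac{d}{dt}\psi = 2\,{\bf tr}\{u\hbbz^\top R\,\Omega\} = 2\langle A,\Omega\rangle$, where $A = R^\top\hbbz(R^\top\hbbz - \bbe_3)^\top$.

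The decisive step is then Lemma~\ref{lem:skew}, which for skew-symmetric $\Omega$ gives $2\langle A,\Omega\rangle = \langle A - A^\top,\Omega\rangle$. Reading off the matrix paired with $\Omega$ yields
\begin{equation*}
\nabla_R\psi = A - A^\top = R^\top\hbbz(R^\top\hbbz - \bbe_3)^\top - (R^\top\hbbz - \bbe_3)\hbbz^\top R,
\end{equation*}
which is exactly \eqref{eq:nabla_psi_R}; it is skew-symmetric automatically, being of the form $A - A^\top$, so the flow \eqref{eq:Rdiff} indeed evolves within $SO(3)$.

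I expect the main obstacle to be the bookkeeping of transposes when turning the scalar $2u^\top\Omega^\top R^\top\hbbz$ into the trace $2\,{\bf tr}\{u\hbbz^\top R\,\Omega\}$ and correctly identifying $A$ before symmetrizing. The one genuine subtlety, worth stating explicitly, is why Lemma~\ref{lem:skew} is indispensable rather than cosmetic: the naive Euclidean gradient $2\hbbz(R^\top\hbbz - \bbe_3)^\top$ is not skew-symmetric, and it is exactly the antisymmetrization supplied by the lemma that projects onto the tangent space of $SO(3)$ and produces a constraint-preserving descent direction.
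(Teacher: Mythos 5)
Your proof is correct and follows essentially the same route as the paper: the paper also identifies $\nabla_R\psi$ from the first-order variation $\psi(\bbr, R(I+\Omega)) = \psi(\bbr,R) - 2\langle \bbv\hbbz^\top R,\Omega\rangle + o(\norm{\Omega})$ with $\bbv = R^\top\hbbz - \bbe_3$, and then applies Lemma~\ref{lem:skew} to antisymmetrize and read off the matrix paired with $\Omega$. Your only (cosmetic) difference is differentiating along the curve $\dot R = R\Omega$ instead of expanding the perturbation $R(I+\Omega)$, which yields the identical $A = R^\top\hbbz(R^\top\hbbz-\bbe_3)^\top$ and the same skew-symmetric gradient $A - A^\top$.
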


\begin{proof} 
For any skew symmetric matrix $\Omega$,
\agn*{
\norm{(R(I+\Omega))^\top  \hbbz - \bbe_3}^2
&=
\norm{
R^\top  \hbbz - \bbe_3 -\Omega  R^\top  \hbbz}^2.
}
Let $\bbv = R^\top  \hbbz - \bbe_3$ to simplify notation.  
Using the first order approximation of the neighborhood of the rotation matrix $R(\Omega) \approx I + \Omega$, where $\Omega$ is skew-symmetric, and using Lemma~\ref{lem:skew} along with the basic properties of inner products, we have that
\agn*{
\psi(\bbr, R(I + &\Omega)) \!
= \!\norm{\bbr -\bbr^*}^2 \!\!+ \norm{\bbv-\Omega  R^\top  \hbbz}^2
\\
&= \!\norm{\bbr -\bbr^*}^2 \!\!+ \norm{\bbv}^2\!\! -2 \left\langle \bbv,  \Omega  R^\top  \hbbz \right\rangle  \! + \!  o \left( \norm{\Omega} \right)
\\
&=\psi(\bbr, R) -2 \left\langle
\bbv  \hbbz^\top R,
\Omega  
\right\rangle + o \left( \norm{\Omega} \right)
\\
&= 
\psi(\bbr, R) \!  + \!  \left\langle
  R^\top \hbbz \bbv^\top\!  \!
 \! - \! 
 \bbv  \hbbz^\top R 
  ,
\Omega  
\right\rangle  \! +  \! o \left( \norm{\Omega} \right)
,
}
from which we identify $  R^\top \hbbz \bbv^\top 
-
 \bbv  \hbbz^\top R $ as $\nabla_R\psi (\bbr, R),$ and the result follows immediately.
\end{proof}
Additionally, we have from elementary calculus that
\begin{align}\label{eq:nabla_psi_r}
\nabla_\bbr \psi(\bbr,R) 
&= 2 (\bbr - \bbr^*)
\\
&=  2 \left( \bbr  -   \bbr(t_{k-1})   -   R (t_{k-1})( \bbp_{o,k}   -\bbp_{o,k-1}) \right).\nonumber
\end{align}

The following result shows that the closed loop system \eqref{eq:rRdiff} is globally asymptotically stable about the minimizers of $\psi$.
\begin{thm}\label{thm:convergence}
The trajectories of the closed loop system \eqref{eq:rRdiff} globally converge to the set of minimizers of the function $\psi$.
\end{thm}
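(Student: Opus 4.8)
The plan is to use $\psi$ itself as a Lyapunov function and invoke LaSalle's invariance principle on the manifold $\reals^3 \times SO(3)$. First I would record the qualitative features of $\psi$: it satisfies $\psi(\bbr,R)\geq 0$; it is autonomous on the interval $[t_{k-1},t_k]$, since $\bbr^*$, $\hbbz$, and $\hbbx_o$ are fixed once the NBV has been computed at $t_{k-1}$; and its sublevel sets are compact, because the orientation term $\norm{R^\top\hbbz-\bbe_3}^2$ is bounded ($SO(3)$ being compact) while the position term $\norm{\bbr-\bbr^*}^2$ is coercive in $\bbr$. Thus $\{\psi\leq c\}$ sits inside a compact product set, trajectories stay bounded, and the invariance principle applies.

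Next I would differentiate $\psi$ along \eqref{eq:rRdiff}. For the position term, \eqref{eq:nabla_psi_r} gives $\tfrac{d}{dt}\norm{\bbr-\bbr^*}^2 = \nabla_\bbr\psi^\top\dot{\bbr} = -\norm{\nabla_\bbr\psi}^2$. For the orientation term, set $\Omega=-\nabla_R\psi$, which is skew-symmetric by Proposition~\ref{lem:dpdR}, so that \eqref{eq:Rdiff} reads $\dot{R}=R\Omega$ and Lemma~\ref{lem:flow} keeps $R(t)\in SO(3)$. Using the first-order expansion established in the proof of Proposition~\ref{lem:dpdR}, namely $\psi(\bbr,R(I+\Omega)) = \psi(\bbr,R) + \langle\nabla_R\psi,\Omega\rangle + o(\norm{\Omega})$ with $\langle A,B\rangle={\bf tr}(A^\top B)$, I obtain $\tfrac{d}{dt}\psi|_{\text{orient}} = \langle\nabla_R\psi,\Omega\rangle = -\norm{\nabla_R\psi}^2$. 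Hence $\dot{\psi} = -\norm{\nabla_\bbr\psi}^2-\norm{\nabla_R\psi}^2\leq 0$, with equality only where both gradients vanish.

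By LaSalle, the trajectories converge to the largest invariant subset of $\{\dot{\psi}=0\}=\{\nabla_\bbr\psi=0\}\cap\{\nabla_R\psi=0\}$. The first condition forces $\bbr=\bbr^*$ exactly, and substituting $\bbv=R^\top\hbbz-\bbe_3$ into \eqref{eq:nabla_psi_R} shows $\nabla_R\psi = \bbe_3(R^\top\hbbz)^\top-(R^\top\hbbz)\bbe_3^\top$, which vanishes precisely when $R^\top\hbbz=\pm\bbe_3$. The case $R^\top\hbbz=\bbe_3$ yields $\psi=0$, the global minimum, whereas $R^\top\hbbz=-\bbe_3$ is an undesired equilibrium where the orientation term is maximal. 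The main obstacle is therefore upgrading \emph{convergence to critical points} to \emph{convergence to minimizers}: I would argue that the equilibrium $R^\top\hbbz=-\bbe_3$ is unstable—by linearizing the orientation flow there to exhibit a direction of strict increase of $\psi$, or by noting that it is a strict maximum of the orientation potential restricted to the fiber $\{\bbr^*\}\times SO(3)$—so that its region of attraction has measure zero. Every remaining trajectory then converges to the set $\{\bbr^*\}\times\{R:R^\top\hbbz=\bbe_3\}$, which is exactly where $\psi$ is globally minimized.
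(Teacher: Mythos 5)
Your proposal is correct and, at its core, follows the same route as the paper: $\psi$ is used as a Lyapunov function, its derivative along \eqref{eq:rRdiff} is shown to be nonpositive, and the equilibria are extracted from the condition $\dot{\psi}=0$. Your gradient-flow form of the derivative is in fact identical to the paper's expression: since $\nabla_R\psi = R^\top\hbbz\,\bbv^\top - \bbv\,\hbbz^\top R$ is skew-symmetric, a direct expansion gives $\norm{\nabla_R\psi}^2 = 2\left(\norm{\bbv}^2 - (\bbv^\top R^\top\hbbz)^2\right)$, so your $\dot{\psi} = -\norm{\nabla_\bbr\psi}^2 - \norm{\nabla_R\psi}^2$ is precisely the paper's $-4\norm{\bbr-\bbr^*}^2 + 2\left((\bbv^\top R^\top\hbbz)^2 - \norm{\bbv}^2\right)$, with the Cauchy--Schwarz step replaced by the manifest nonpositivity of minus a sum of squares. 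The genuine divergence is in the critical-set analysis, and there your version is the more careful one. The paper's chain of equalities correctly arrives at $1-(\hbbz^\top R\bbe_3)^2=0$ but then silently discards the root $\hbbz^\top R\bbe_3=-1$, concluding that every critical point satisfies $R^\top\hbbz=\bbe_3$. As your factorization $\nabla_R\psi = \bbe_3(R^\top\hbbz)^\top - (R^\top\hbbz)\bbe_3^\top$ makes explicit, the antipodal configurations $R^\top\hbbz=-\bbe_3$ (camera looking exactly away from the target, where $\psi=4$) form a genuine equilibrium family of the closed loop: a trajectory initialized there never leaves it, so LaSalle alone yields convergence to the critical set, not to the minimizers, and the theorem's literal claim of \emph{global} convergence can only hold for initial conditions outside the stable set of that family. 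Your proposed finish---instability of the antipodal family because it maximizes the orientation term on the fiber $\{\bbr^*\}\times SO(3)$, hence a negligible basin of attraction---is the right and standard way to close this gap, but as written it remains a sketch: a measure-zero basin does not follow from instability alone, and one needs a stable/center-manifold argument transverse to the one-parameter antipodal equilibrium set (a circle in $SO(3)$). In short, you reproduce the paper's Lyapunov skeleton with an equivalent derivative computation, you correct an algebraic oversight in the paper's characterization of the equilibria, and the only incomplete piece of your argument is the rigorous justification of the measure-zero claim---a step the paper's proof avoids only by virtue of that oversight.
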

\begin{proof}
By inspection of \eqref{eq:psi_potential}, $\psi(\bbr, R) \ge 0,$ with equality if and only if $R^\top \hbbz = \bbe_3$ and $ \bbr=\bbr^*$.
In the remainder of the proof, we show that $\psi$ is a suitable Lyapunov function for the closed loop system \eqref{eq:rRdiff}, and the set of equilibrium points is exactly the set of minimizers of $\psi.$
To begin, let $\bbv$ be defined as above, so that
\begin{align}
&\dot{\psi} (\bbr, R) \!
=  \! 2 \left\langle \bbr \!   - \!   \bbr^*, \dot{\bbr} \right\rangle
+2 \left\langle R^\top \hbbz - \bbe_3, \dot{R}^\top \hbbz \right\rangle\nonumber
\\ 
 &=\!  2\left\langle \bbr  \! - \!  \bbr^*, \!-\!\nabla_\bbr \psi(\bbr, R) \right\rangle
+2 \left\langle \bbv,  (-R \nabla_R \psi(\bbr, R) )^\top \hbbz \right\rangle\nonumber
\\ 
&= \!  2\left\langle \bbr  \! - \!  \bbr^*, - 2 (\bbr \!  -  \! \bbr^*) \right\rangle
+2  \left\langle \bbv,  (R^\top   \!  \hbbz \bbv^\top \!  \!  \!  - \!  \bbv \hbbz^\top  \!  R)R^\top  \! \hbbz \right\rangle\nonumber
\\ 
&= \!  - 4\norm{\bbr   \!  -   \!  \bbr^*}^2  \!  
 +  \!   2 \left( \! \left\langle \bbv,  R^\top   \!  \hbbz \bbv^\top \! R^\top  \!   \hbbz \right\rangle   \!  -   \!  \left\langle \bbv ,\bbv \hbbz^\top  \!  R R^\top  \! \hbbz \right\rangle \!\right) \nonumber
\\
&= \!  - 4\norm{\bbr \!  - \!   \bbr^*}^2
+2\left( \left(\bbv^\top R^\top \hbbz \right)^2  - \norm{\bbv}^2 \right).
\label{eq:dpsi-negative}
\end{align}
The Cauchy-Schwartz inequality implies that
\[
\left(\bbv^\top R^\top \hbbz \right)^2 
 \le  \norm{ \bbv}^2 \norm{ R^\top \hbbz}^2 = \norm{ \bbv}^2 ,
 \]
 so that \eqref{eq:dpsi-negative} is the sum of two nonpositive terms.
 Thus, $\dot{\psi} \le 0$, with equality if and only if both of the nonpositive terms are zero.
 In particular, $\dot{\psi} (\bbr, R)=  0$ if and only if $ \bbr = \bbr^*$ and
\agn*{
\norm{\bbv}^2 &= (\hbbz^\top R \bbv )^2 
\\
(\hbbz^\top  R - \bbe_3^\top)(R^\top \hbbz - \bbe_3) &= (\hbbz^\top R (R^\top \hbbz - \bbe_3) )^2  
\\
2(1-\hbbz^\top R \bbe_3) &= (1 -\hbbz^\top R \bbe_3)^2  
\\
1 -  (\hbbz^\top R \bbe_3)^2&= 0,
}
which implies $\hbbz^\top R= \bbe_3^\top$ for all critical points. 
Invoking the Lyapunov Stability Theorem, the result follows.\end{proof}
Note that the system \eqref{eq:rRdiff} evolves during the time interval $[t_{k-1},t_k]$, until a new observation of the targets is made at time $t_k$. This time interval might not be sufficient for the camera to realize exactly the NBV. Nevertheless, Theorem~\ref{thm:convergence} implies that at time $t_k$, the position and orientation of the camera is closer to desired NBV than it was at time $t_{k-1}$. By appropriately choosing the length of the time interval $[t_{k-1},t_k]$, we may ensure that for practical purposes the camera almost realizes the NBV.

\section{Performance of the Integrated Hybrid System}\label{sec_simulations}
In this section, we illustrate our approach in computer simulations.
We begin by discussing a practical method of how to incorporate field of view constraints in the hybrid system, which is used in our simulations and experimental results.

\subsection{Incorporating Field of View Constraints}
\label{sec:fov}
For a 3D point to appear in a given image, that point must lie within the field of view of both cameras in the stereo pair as the robot rotates and translates in an effort to minimize \eqref{eq:psi_potential}.  We assume that the (L) and (R) cameras have identical square sensors with a 70$^\circ$ field of view, which, when combined with the image width $w$, determines the focal length $f$.
Let
\[
\ccalS= \set{ [x,y,z]^\top \in \reals^3 \colon \abs{x} \leq \frac{wz - bf}{2f}   ,\:  \abs{y} \leq  \frac{zw}{2f},\: z > \frac{bf}{w} }
\]
denote the set of points in relative coordinates that are visible to both cameras in the pair. This set is the intersection of two pyramids facing the positive $z$ direction with vertices located at the two camera centers.  Figure~\ref{fig:fov_3d} visualizes the set $\ccalS$ in two dimensions (blue shaded region).  Note that the intersection of the two pyramids is located at  $z = \frac{bf}{w}$, and therefore any point with $z < \frac{bf}{w}$ can not be in view of both cameras.
\begin{figure}[t]
 \centering
 \includegraphics[width=7cm]{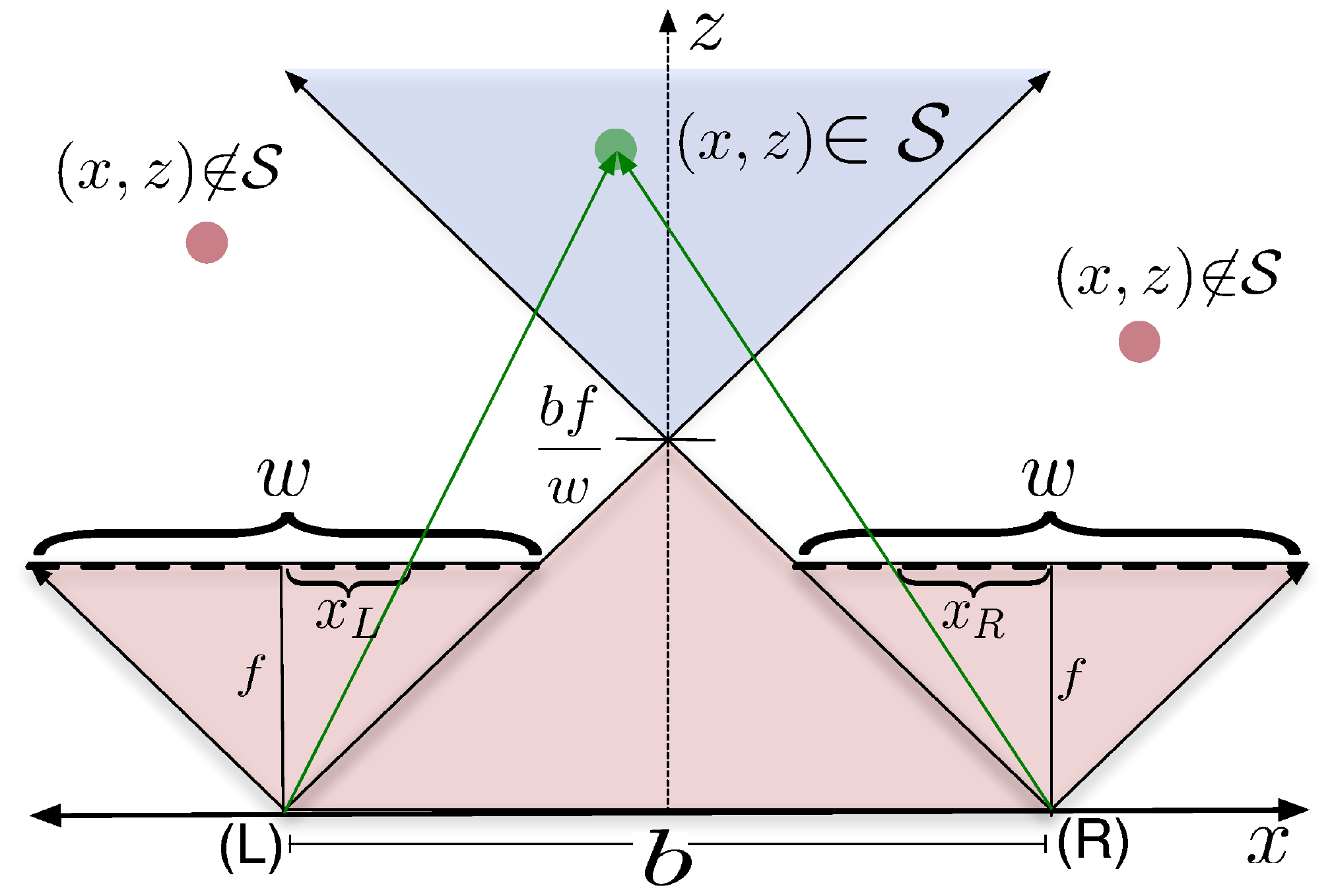}
 \caption{The field of view for a stereo camera in the $xz$ plane.  The field of view in the $yz$ plane is similar.}  \label{fig:fov_3d}
\end{figure}

Maintaining all targets within the FoV $\ccalS$ requires that the camera positions and orientations evolve in the set
\begin{equation} \label{eq:ccalD}
\ccalD = \set{ (\bbr,R) \in \reals^3 \times SO(3) \, : \, \set{\bbp_i}_{i \in \ccalN}  \in \ccalS }
\end{equation}
for all time. To ensure invariance of the set $\ccalD$, we define the potential functions
\begin{subequations} \label{phi}
\begin{align}
\phi_{i1}(\bbr,R) &= \left(\frac{w[\bbp_i]_z - bf}{2f}\right)^2 - [\bbp_i]_x^2,\\
 \phi_{i2}(\bbr,R) &= \left(\frac{w[\bbp_i]_z}{2f}\right)^2 - [\bbp_i]_y^2,\\
\phi_{i3}(\bbr,R) &= [\bbp_i]_z^2 - \left(\frac{bf}{w}\right)^2,
\end{align}
\end{subequations}that are positive if $(\bbr,R) \in \ccalD$, where
$[\bbp_i]_x$, $[\bbp_i]_y$, and $[\bbp_i]_z$ are the $x$, $y$, and $z$ coordinates of target $i$ in the relative camera frame, that can be expressed in terms  of the camera position and orientation as
\begin{subequations} \label{eq:coord_transformations}
\begin{align}
[\bbp_i]_x& =  \left\langle \bbe_1,R^\top (\hat{\bbx}_i - \bbr)\right\rangle  ,\\
[\bbp_i]_y &=  \left\langle \bbe_2,R^\top (\hat{\bbx}_i - \bbr)\right\rangle  ,\\
[\bbp_i]_z &= \left\langle \bbe_3,R^\top (\hat{\bbx}_i - \bbr)\right\rangle  ,
\end{align}
\end{subequations}
where $\bbe_1$, $\bbe_2$, and $\bbe_3$ are the unit vectors in the standard basis.  Then, given an estimate of target locations $\hat{\bbx}_i$ for $i=1,\dots,n$, we augment the potential $\psi$ from \eqref{eq:psi_potential} by adding barrier functions $1/\phi_{ij}$ that will grow without bound anytime a target is close to the boundary of the feasible set $\ccalD$.  
The repulsive force supplied by $\phi_i$ is regulated by a user defined penalty parameter $\rho>0$.
The artificial potential function, incorporating the desired FoV constraints, is given by
\begin{equation} \label{eq:objectivehat}
\hat{\psi}\left(\bbr,R \right) = \psi\left(\bbr,R\right) + \frac{\rho}{n}  \sum_{i \in \ccalN}\sum_{j = 1}^3 g(\phi_{ij}),
\end{equation}
where 
$g \colon \reals \to \reals$ is a barrier potential, and
multiplication by $1/n$ ensures that the number of targets does not affect the strength of the penalty.  The penalty parameter $\rho$ is set sufficiently small so that $\hat{\psi}$ approximates $\psi$ when $(\bbr,R)$ is in the interior of $\ccalD$ while maintaining that $\hat{\psi}$ becomes extremely large for $(\bbr,R)$ that approach the boundary of $\ccalD$.  Replacing $\psi$ with $\hat{\psi}$ in the gradient flow in Algorithm~\ref{alg1} provides the desired potential that realizes the NBV and respects FoV constraints.
In the simulations, we set $g(a) = \frac{1}{a}$.

In what follows we derive analytical expressions for the gradients of $\hat{\psi}$.  In particular, we have that
\begin{subequations}
\label{eq:grad_hpsi}
\begin{align}
\nabla_\bbr \hat{\psi}\left(\bbr,R \right) = \nabla_\bbr  \psi + \frac{\rho}{n} \sum_{i \in \ccalN}\sum_{j = 1}^3 g' (\phi_{ij})\nabla_\bbr\phi_{ij}, \\
\nabla_R \hat{\psi}\left(\bbr,R \right) = \nabla_R  \psi + \frac{\rho}{n} \sum_{i \in \ccalN}\sum_{j = 1}^3g' (\phi_{ij})   \nabla_R \phi_{ij}.\label{eq:grad_hpsi_dR}
\end{align}
\end{subequations}
The derivative of the barrier function, $g'$, is available from elementary calculus.
The gradients in \eqref{eq:grad_hpsi} with respect to $\bbr$ and $R$ can be obtained by application of the chain rule as
\begin{align}
\nabla_\bbr\phi_{ij}  \! &=  \! \frac{\partial \phi_{ij}}{\partial [\bbp_i]_x}\nabla_\bbr [\bbp_i]_x +\frac{\partial \phi_{ij}}{\partial [\bbp_i]_y}\nabla_\bbr [\bbp_i]_y +\frac{\partial \phi_{ij}}{\partial [\bbp_i]_z}\nabla_\bbr [\bbp_i]_z\nonumber\\
\nabla_R\phi_{ij}  \! &=  \!  \frac{\partial \phi_{ij}}{\partial [\bbp_i]_x}\nabla_R [\bbp_i]_x  \! +  \!\frac{\partial \phi_{ij}}{\partial [\bbp_i]_y}\nabla_R [\bbp_i]_y   \! +  \! \frac{\partial \phi_{ij}}{\partial [\bbp_i]_z}\nabla_R [\bbp_i]_z.\label{chain}
\end{align}
The coefficients in \eqref{chain} can be obtained by differentiating \eqref{phi}.  The following propositions provide the gradients of $[\bbp_i]_x, [\bbp_i]_y$, and $[\bbp_i]_z$ with respect to $R$ and $\bbr$.
\begin{prop} \label{lem:Dxyz_DR}
The gradients of $[\bbp_i]_x, [\bbp_i]_y$, and $[\bbp_i]_z$ with respect to $R$ are given by the skew symmetric matrices
\begin{subequations}
\begin{align}
\nabla_R [\bbp_i]_x &= \frac{1}{2} \left[ R^\top (\hat{\bbx}_i - \bbr) \bbe_1^\top  - \bbe_1 (\hat{\bbx}_i - \bbr)^\top R  \right], \label{Dx_DR}\\
\nabla_R [\bbp_i]_y &=\frac{1}{2}  \left[  R^\top (\hat{\bbx}_i - \bbr) \bbe_2^\top  -\bbe_2 (\hat{\bbx}_i - \bbr)^\top R\right] ,\\
\nabla_R [\bbp_i]_z &= \frac{1}{2}  \left[ R^\top (\hat{\bbx}_i - \bbr) \bbe_3^\top  -\bbe_3 (\hat{\bbx}_i - \bbr)^\top R \right].
\end{align}
\end{subequations}\end{prop}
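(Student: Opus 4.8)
The plan is to mirror the variational argument used to establish Proposition~\ref{lem:dpdR}: perturb $R$ along the group, expand to first order, and read off the skew-symmetric gradient using Lemma~\ref{lem:skew}. First I would write $[\bbp_i]_x$ in the bilinear form $[\bbp_i]_x = \bbe_1^\top R^\top(\hat{\bbx}_i - \bbr)$ taken from \eqref{eq:coord_transformations}, and replace $R$ by the first-order perturbation $R(I+\Omega)$ with $\Omega$ skew-symmetric, consistent with the convention under which the flow \eqref{eq:Rdiff} is defined. Since $(I+\Omega)^\top = I - \Omega$, the zeroth- and first-order terms separate cleanly and the increment is $[\bbp_i]_x(R(I+\Omega)) - [\bbp_i]_x(R) = -\bbe_1^\top \Omega\, R^\top(\hat{\bbx}_i - \bbr) + o(\norm{\Omega})$. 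The advantage of this coordinate is that the map is linear in $R$, so there is nothing higher-order to control beyond the skew-symmetry of $\Omega$.

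Next I would rewrite the scalar increment as a matrix inner product so the gradient can be identified. Setting $\bbw = R^\top(\hat{\bbx}_i - \bbr)$ and using $\langle A, \Omega\rangle = {\bf tr}(A^\top\Omega)$ together with cyclicity of the trace, the increment equals $\langle -\bbe_1 \bbw^\top, \Omega\rangle$. This produces a naive representative $-\bbe_1\bbw^\top$ that is \emph{not} skew-symmetric, which is expected, since the flow \eqref{eq:Rdiff} only sees the component of the differential that pairs with skew-symmetric $\Omega$. Applying Lemma~\ref{lem:skew} in the form $\langle A, \Omega\rangle = \tfrac12\langle A - A^\top, \Omega\rangle$ to $A = -\bbe_1\bbw^\top$ yields the skew-symmetric representative $\tfrac12(\bbw\bbe_1^\top - \bbe_1\bbw^\top)$, which upon resubstituting $\bbw = R^\top(\hat{\bbx}_i - \bbr)$ and using $\bbw^\top = (\hat{\bbx}_i - \bbr)^\top R$ is exactly \eqref{Dx_DR}.

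Finally, the derivations for $[\bbp_i]_y$ and $[\bbp_i]_z$ are structurally identical: only the fixed basis vector $\bbe_1$ is replaced by $\bbe_2$ or $\bbe_3$, and the same two steps reproduce the stated formulas. I expect no genuine obstacle; the one place requiring care is the trace-cyclicity bookkeeping that converts the scalar $-\bbe_1^\top \Omega \bbw$ into $\langle -\bbe_1\bbw^\top, \Omega\rangle$, since a stray sign or transpose there would corrupt the subsequent symmetrization. The conceptual point, already exploited in Proposition~\ref{lem:dpdR}, is that because $\Omega$ is constrained to be skew-symmetric, the gradient is determined only up to its skew-symmetric part, so Lemma~\ref{lem:skew} is the essential tool rather than a cosmetic simplification.
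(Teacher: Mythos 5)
Your proof is correct and follows essentially the same route as the paper: perturb $R$ to $R(I+\Omega)$ in the bilinear form $\bbe_1^\top R^\top(\hat{\bbx}_i-\bbr)$, convert the first-order scalar increment to a matrix inner product via trace cyclicity, and apply Lemma~\ref{lem:skew} to extract the skew-symmetric representative. The sign and transpose bookkeeping checks out, and your observation that linearity in $R$ makes the expansion exact (no $o(\norm{\Omega})$ remainder) is a nice refinement of the paper's argument.
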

\begin{proof}
The procedure here is nearly identical to the method in Proposition~\ref{lem:dpdR}.
Specifically,
\agn*{
[\bbp_i]_x (\bbr,R(I+\Omega))
&=\!
\left\langle
\bbe_1, (R(I+\Omega))^\top (\hbbx_i -\bbr)
\right\rangle
\\
&=\!
\left\langle
\bbe_1,  (I + \Omega)^\top R^\top  (\hbbx_i -\bbr)
\right\rangle
\\
&=\!
\left\langle
\bbe_1, R^\top  (\hbbx_i -\bbr)
\right\rangle
\! -\!
\left\langle
\bbe_1,\Omega R^\top  (\hbbx_i  \! -  \! \bbr)
\right\rangle
\\
&=\!
[\bbp_i]_x (\bbr,R)
\! -\!
\left\langle
\bbe_1 (\hbbx_i -\bbr)^\top R,\Omega 
\right\rangle.
}
Again we can use Lemma~\ref{lem:skew} to obtain that
\agn*{
\left\langle
\bbe_1 (\hbbx_i \! - \! \bbr)^\top R,\Omega 
\right\rangle \!
&= \!
\frac{1}{2}
\left\langle
\bbe_1 (\hbbx_i \! -\! \bbr)^\top R - R^\top  (\hbbx_i -\bbr)\bbe_1^\top, \Omega 
\right\rangle,
}
from which we can identify the gradient as the term linear in $\Omega,$ and the proof follows.
The gradients of the other two coordinates are found analogously.
\end{proof}

Note that the gradients of the functions $[\bbp_i]_x, [\bbp_i]_y$, and $[\bbp_i]_z$ with respect to $R$ are skew-symmetric, as required for \eqref{eq:Rdiff} to ensure that $R\in SO(3)$ for all time; see Lemma~\ref{lem:flow}.
From elementary calculus, the gradients of $[\bbp_i]_x, [\bbp_i]_y$, and $[\bbp_i]_z$ with respect to $\bbr$ are
\begin{equation}\label{d_wr_r}
\nabla_\bbr [\bbp_i]_x \! =  \!  -R \bbe_1 ,\, \nabla_\bbr [\bbp_i]_y  \! = \!  -R \bbe_2 ,  \,
\nabla_\bbr [\bbp_i]_z  \! =  \! -R \bbe_3 .
\end{equation}

\subsection{Outline of Controller}

\begin{algorithm}[t]
\caption{Hybrid control in the relative and global frames. }
\label{alg1}
\begin{algorithmic}[1]
\REQUIRE A position $\bbr(t_{k-1})$ and orientation $R(t_{k-1})$ of the camera and estimated positions $\hat{\bbx}_{i,k-1}$ of the targets.
\STATE \label{step:rel} Find the next best view associated with objective ``$o$'' according to equation \eqref{eqn_p_update}:
\begin{equation*}
 \bbp_{o,k} =\bbp_{o,k-1} - K \int_{0}^{T} \nabla h\left(\bbp_{o}(\tau)\right) d\tau.
\end{equation*}
\STATE Move the camera according to the system \eqref{eq:rRdiff}:
\begin{align*}
\dot{\bbr} &= - \nabla_\bbr {\hat{\psi}}(\bbr,R),  \\
\dot{R} &= - R \nabla_R \hat{\psi}(\bbr,R),
\end{align*}
for a time interval of length $t_{k}-t_{k-1}$ in order to realize the next best view $\bbp_{o,k}$ obtained from step 1.
\STATE At time $t_{k}$ observe targets and incorporate new estimates and covariances into KF as in \eqref{eq:zpred} and \eqref{eq:covfinal}.  Increase the observation index $k$ by 1 and return to step 1.
\end{algorithmic}
\end{algorithm}

Algorithm~\ref{alg1} outlines the hybrid controller developed in Sections~\ref{sec:potdes}~and~\ref{sec:realize}.  After initialization, Step 1 determines the NBV according to either the \emph{supremum objective} or the \emph{centroid objective}.  Given a frame rate and sensor speed, we set the integration interval $T$ so that the distance between $\bbp_{o,k-1}$ and $\bbp_{o,k}$ is the maximum distance the camera can travel before making a new observation.  Each time a new observation is made, Step 1 returns a new NBV $\bbp_{o,k}$, which constitutes a discrete switch in the potential $\hat{\psi}$ in Step 2.  This switch results in a new motion plan that guides the robot to a position and orientation that realizes the new NBV.  The camera moves according to Step 2 until a new measurement is taken, at which point we set $k:=k+1$ and return to Step 1.

\subsection{Static Target Localization}
\label{sec:subsec_stat_target_local}
We begin this section by illustrating our approach for a simple scenario involving an array of five stationary targets in two dimensions.  
In this case, the mobile stereo camera effectively has only two motion primitives available: ``reduce depth'' and ``diversify the viewing angle.''  
Thus, the optimal controller will be a state-dependent combination of these two primitives, which should emerge naturally by minimizing the objective function we have described herein.  
For comparison, we present a ``straight baseline'' and a ``circle baseline,'' which exclusively utilize one of these two motion primitives.
Specifically, the circle baseline moves the robot in a circle about the cluster of targets, and the straight baseline drives the robot closer to the targets. 
We require that all methods travel the same distance in each iteration, except for the straight baseline that stops moving once FoV constraints tend to become violated.
To test the validity of our assumption that pixel-noise due to quantization is uniform on the image plane, all simulated observations in this section have been quantized at the pixel level.

Figure~\ref{fig:traj_static} shows robot trajectories for this simple example.
It can be seen that reducing range results in slightly better short-term performance for this situation, but once FoV constraints are nearly violated, repetitive observations from the same spot have correlated noise, leading to divergence of the KF; a similar behavior can be observed in Figure~\ref{fig:errors_and_trace}, which refers to the 3D example bellow.
On the other hand, the circle baseline and the supremum and centroid objectives continuously move the camera, so the i.i.d. noise assumption is not violated and the localization error throughout the simulation keeps being reduced, as can be seen in the shrinking confidence ellipses in Figure~\ref{fig:ell}.  
By combining the two motion primitives automatically, the supremum and centroid objectives reduce noise by an order of magnitude compared to the straight baseline (before its KF diverges) after around 23 observations.

\begin{figure}[t]
 \centering
 \includegraphics[width=7cm]{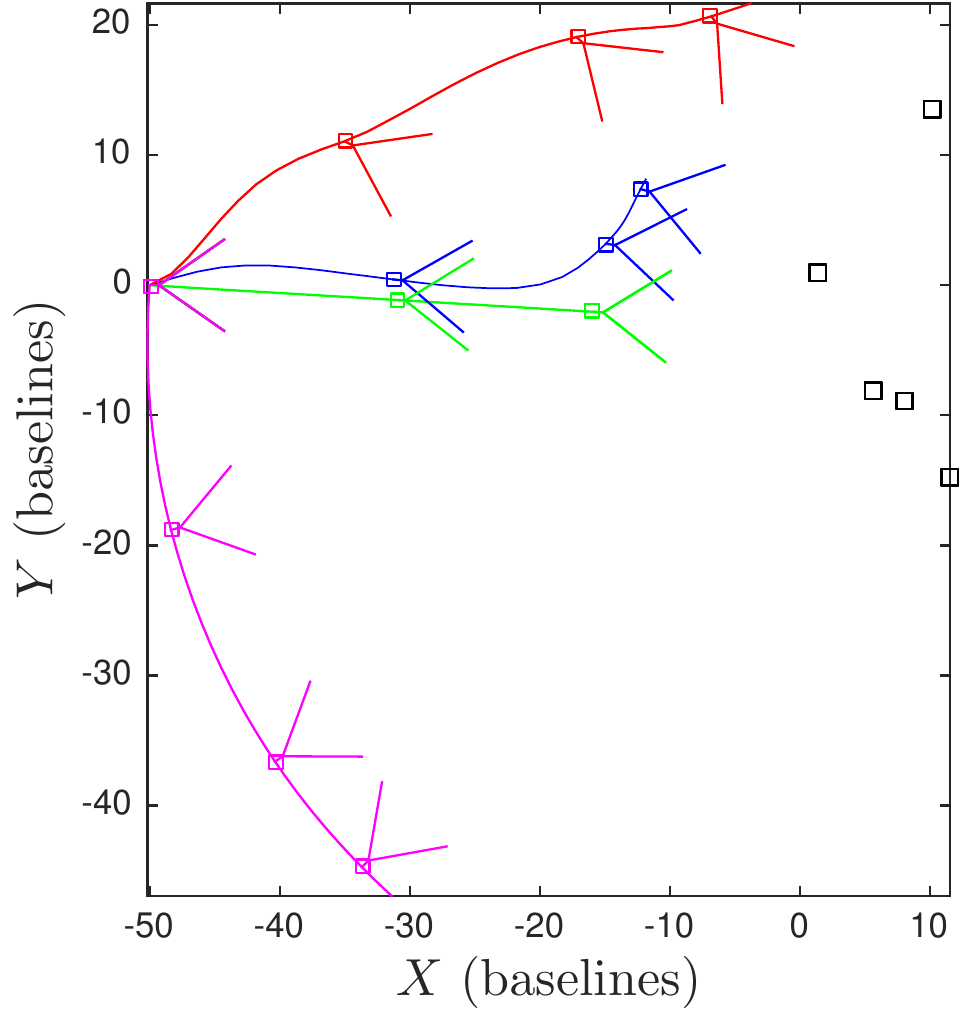}
 \caption{An example of the trajectories generated by our algorithm and the baseline methods, shown in 2D for readability.
Red denotes the trajectory of the supremum, blue denotes the centroid, magenta denotes the circle baseline method, and green the straight baseline method.
The $\square$ symbols show the ground truth target locations.
The triangles emanating from the trajectories represents the orientation and field of view for each objective (see fig.~\ref{fig:fov_3d}).
}\label{fig:traj_static}
\end{figure}

\begin{figure}[t]
 \centering
 \includegraphics[width=7cm]{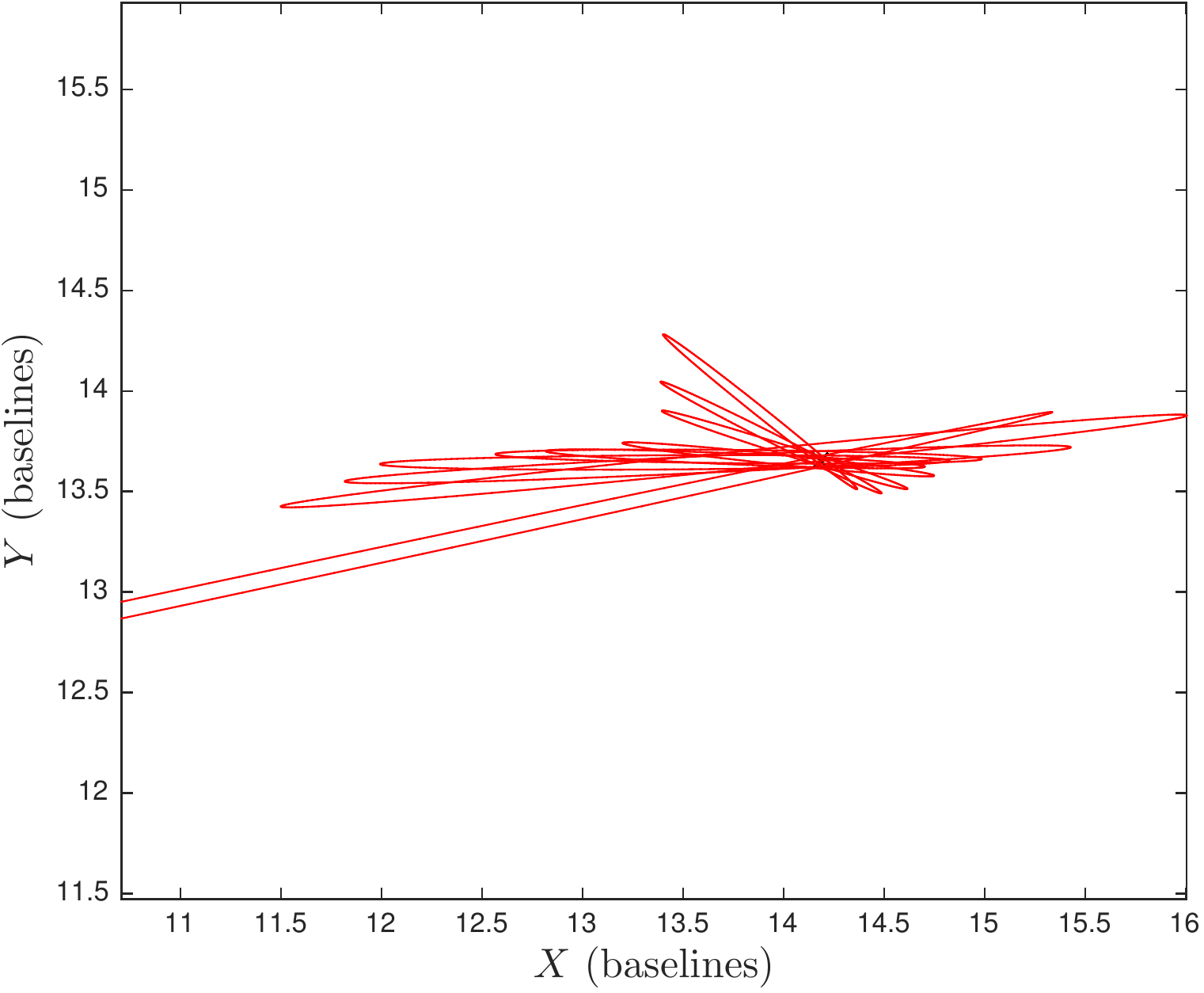}
  \caption{Ten of the $3\sigma$ confidence intervals produced by the supremum objective for one of the targets in Figure~\ref{fig:traj_static}.} \label{fig:ell}
\end{figure}

The following set of simulations considers target localization in three dimensions.  
The goal is to evaluate our algorithm against the baseline methods.
We use an image resolution of 1024$\times$1024 pixels.
The unit of measure is the distance between the two cameras in the stereo rig, or the baseline, which is the characteristic length in stereo vision.
It is depicted as $b$ from Figure~\ref{fig:fov_3d}.
The stereo rig moves 10\% of its baseline between successive images, which corresponds to a $Dt$ in the simulations of $0.1$.
The matrix $Q$ was set to the identity matrix.
In every simulation, the robot begins 50 baselines west of a cluster of targets, which are placed according to a uniform random distribution in the unit cube centered at the origin.  
The penalty parameter $\rho=100$ ensures that all targets remain within the camera's 70$^\circ$ field of view throughout.
The length of the time interval $t_{k+1}-t_k$ between two consecutive observations is chosen so that the robot either realizes the NBV, i.e., achieves $\psi=0$ in \eqref{eq:rRdiff}, or the robot travels the maximum allowed distance between observations.
The gain parameter, $K$ from \eqref{eqn_p_update}, is set to $K = \diag (1,1,7).$
The observers that follow the circle baseline method and straight baseline method at each iteration travel a distance equal to the maximum of the distances that the supremum and centroid traveled in that iteration.
All motion plans make the same amount of total observations.
All use identical camera parameters.  
All observations suffer from quantization noise after pixel coordinates are rounded to the nearest integer. 

\begin{figure}[t]
\centering
\includegraphics[width=7cm]{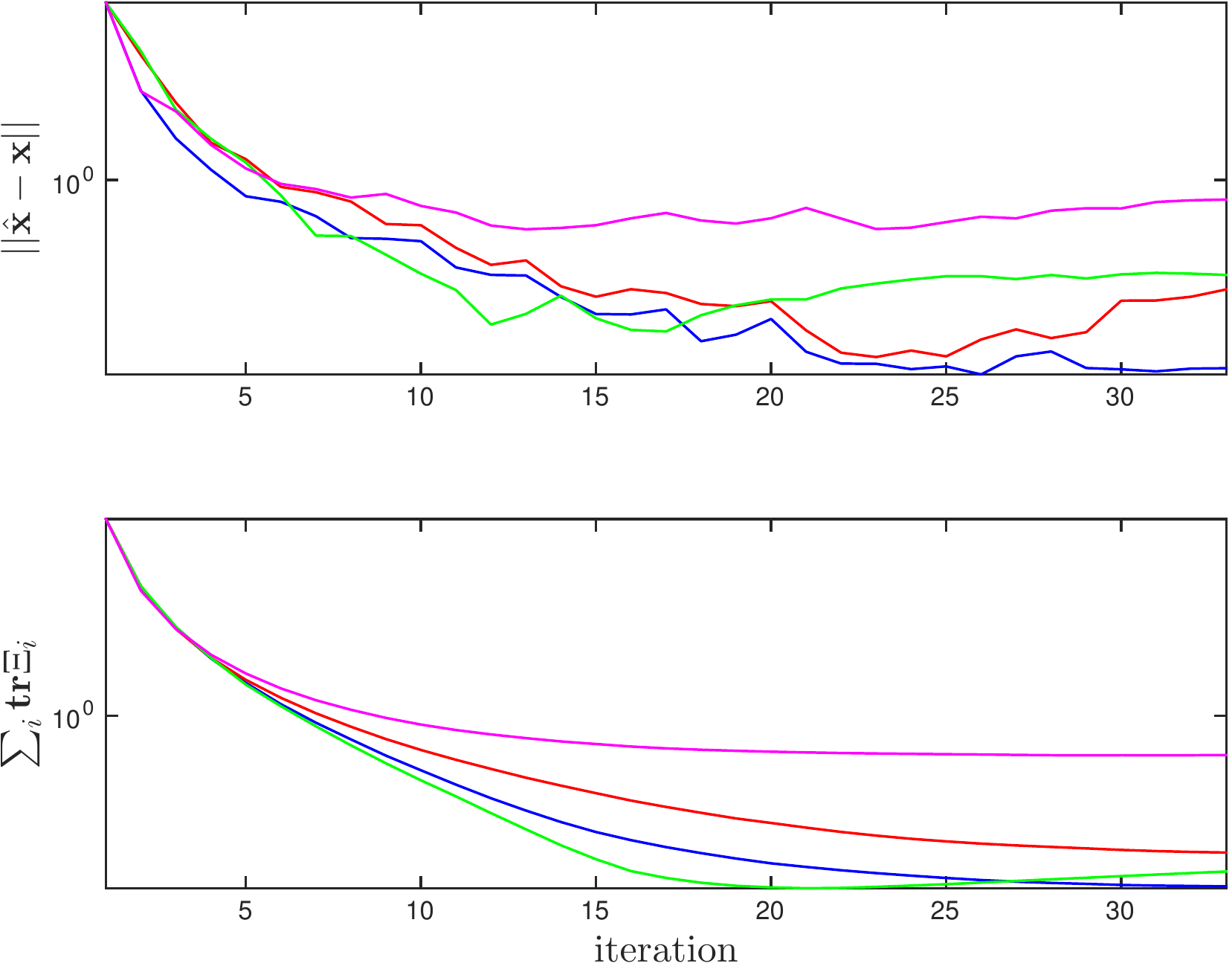}
\caption{Average localization error (top panel) and trace (bottom panel) of the position covaraince of the all targets versus iteration, averaged over 50 simulations.
Red denotes the trajectory of the supremum, blue denotes the centroid, magenta denotes the circle baseline method, and green the straight baseline method.
}\label{fig:errors_and_trace}
\end{figure}

Figure~\ref{fig:errors_and_trace} shows the average total error and the trace of the target location covariance matrices for 50 simulations.
In the bottom panel, evidently the straight baseline method outperforms the supremum and centroid objective in terms of the trace of the posterior covariance matrices, up to the point when it stops being able to move.
This is because the centroid and supremum objectives also obtain control inputs from a penalty function, which repels the robot from views that allow targets near the field of view boundary.
The straight baseline method, on the other hand, can go to the point when one of the targets is on the outer edge of the image, allowing it to get closer.
The centroid and supremum objectives still outperform the straight baseline method in terms of localization error.
Note also that once the stereo rig following the straight baseline method stops moving, it suffers from the same quantized noise in every observation, which is biased, and causes the KF to diverge.
The KFs from the rigs following the circle baseline and the centroid and supremum objectives do not diverge because the individual measurement bias changes when the relative vector changes, effectively de-correlating the errors.

\subsection{Mobile Target Localization}

\begin{figure}[t]
 \centering
 \includegraphics[width=7cm]{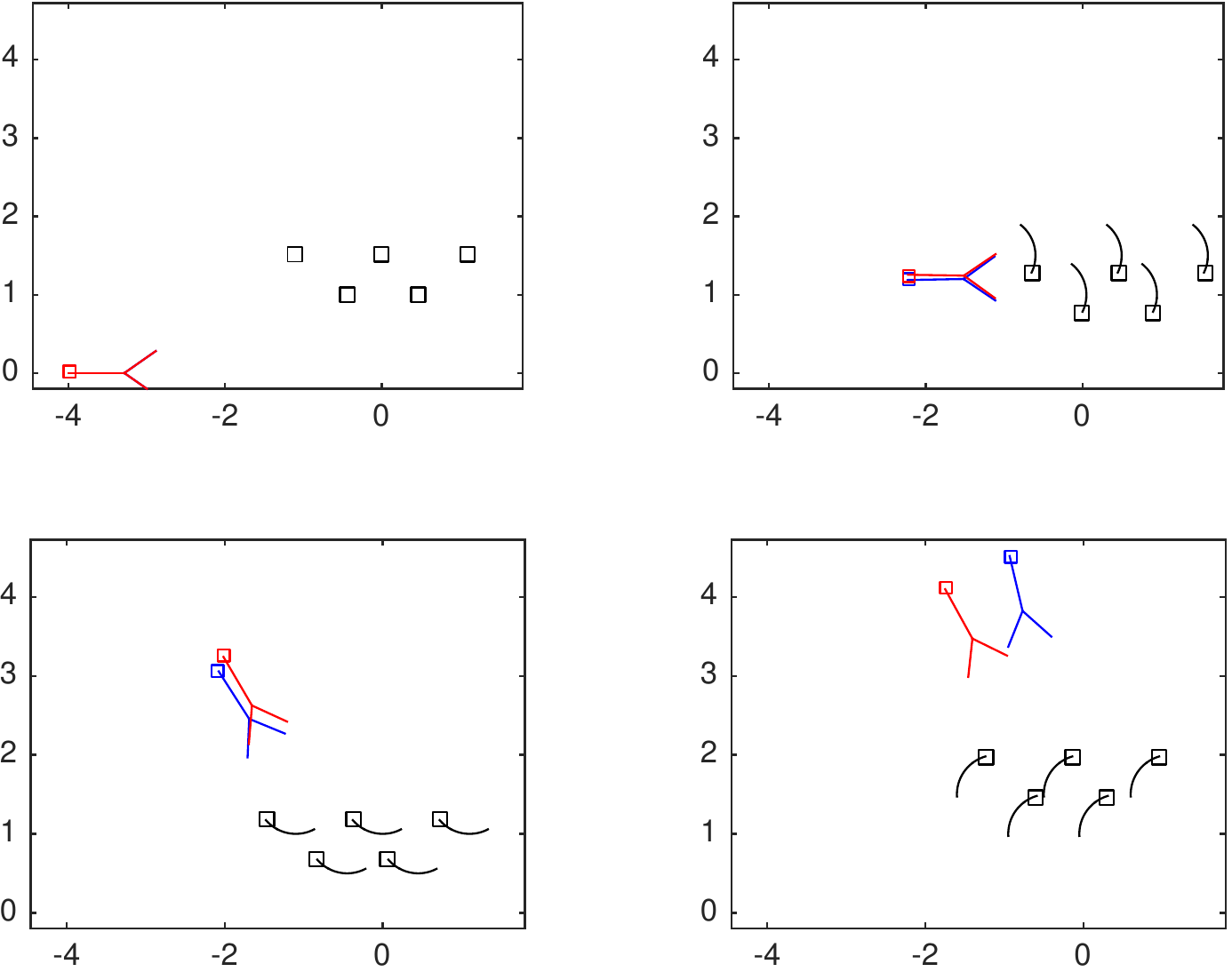}
 \caption{
An example of the trajectories generated by our algorithm when the targets are mobile in two dimensions, with time processing in clockwise order.
Red denotes the supremum and blue denotes the centroid.
The $\square$ symbols show the ground truth target locations, with tails to show their motion.
The triangles emanating from the trajectories represents the orientation and field of view for each objective (see Figure~\ref{fig:fov_3d}).
All units are in baselines.
}\label{fig:traj}
\end{figure}

\begin{figure}[t]
 \centering
 \includegraphics[width=7cm]{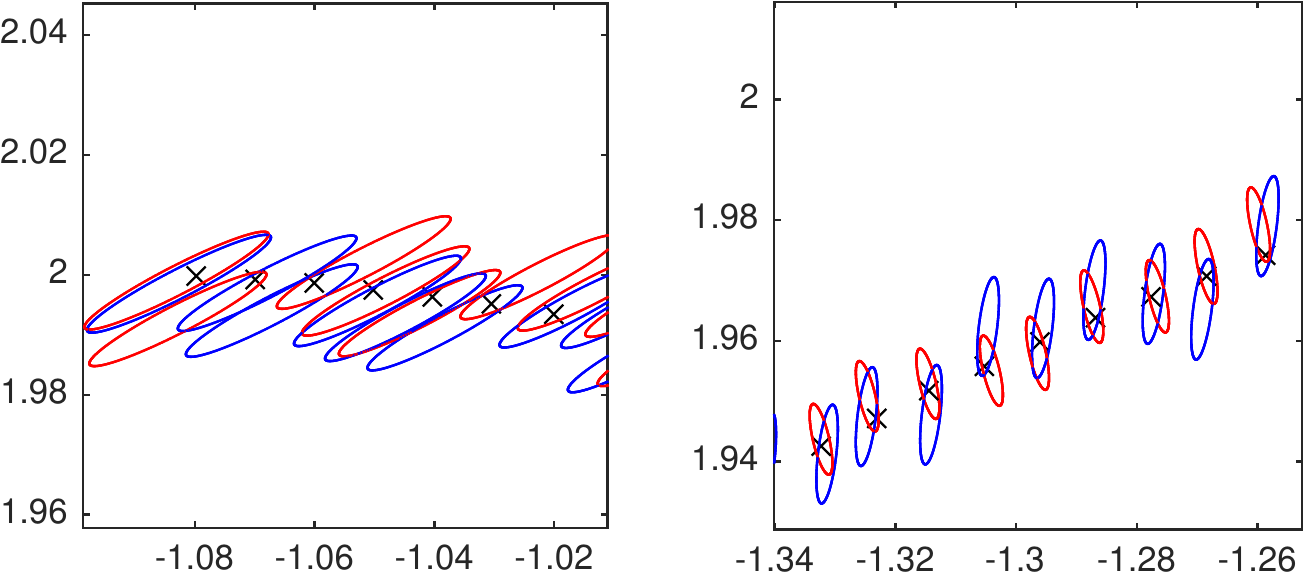}
 \caption{A closeup of the beginning (left panel) and end (right panel) of the left-most target trajectory in Figure~\ref{fig:traj}.  95\% Confidence ellipses associated with each objective are plotted.
 Red denotes the supremum's confidence ellipses, and blue denotes the centroid.
 }
 \label{fig:mobile_ell}
 \end{figure}

\begin{figure}[t]
 \centering
 \includegraphics[width=7cm]{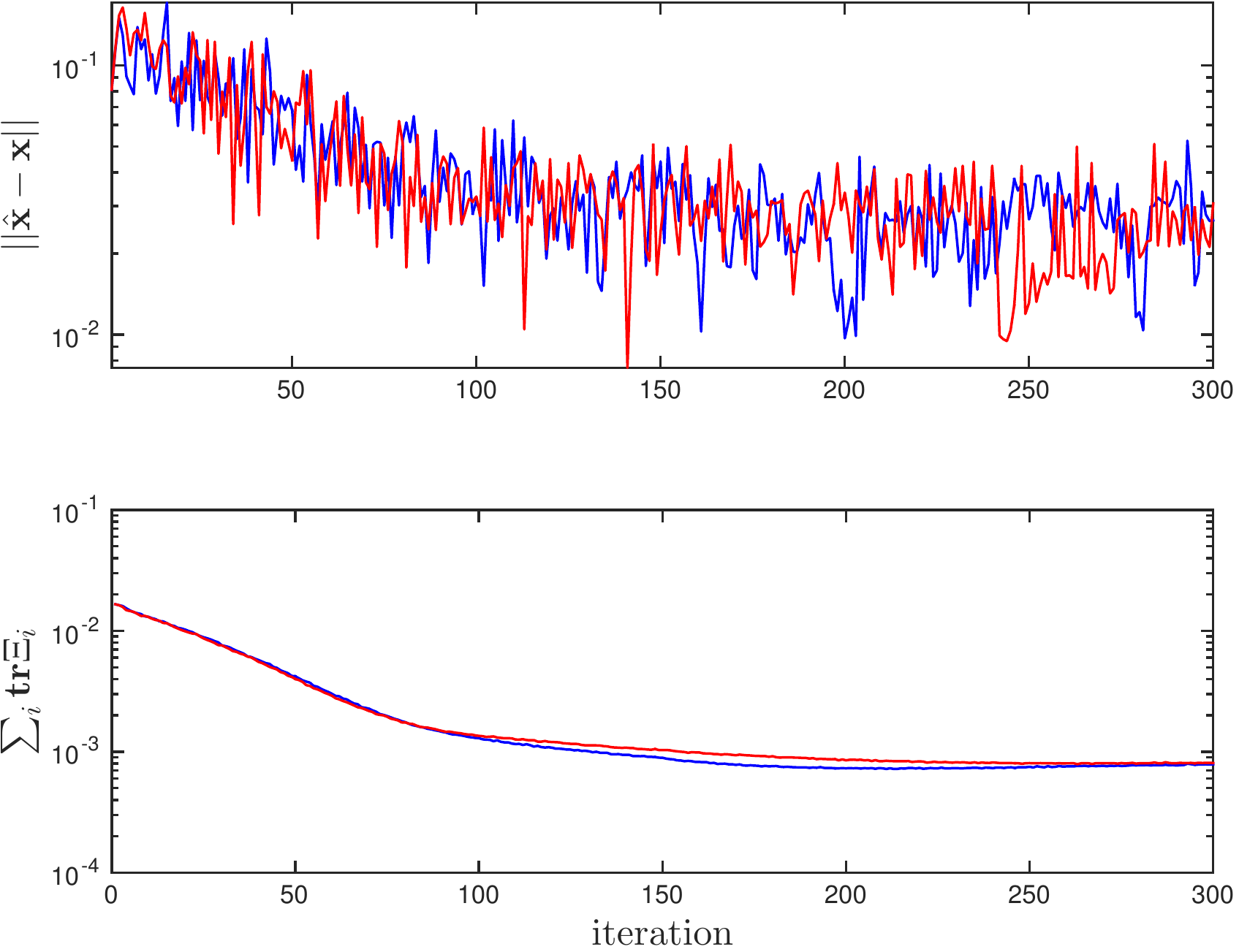}
 \caption{
Localization error (top panel) and trace (bottom panel) of the position covariance of the all targets versus iteration for the mobile simulation shown in Figure~\ref{fig:traj}.
 }
 \label{fig:err_and_trace_mobile}
 \end{figure}

In these simulations, the mobile stereo camera localizes a group of mobile targets that move in the Olympic ring pattern.  
The observers, two cameras implementing the supremum and centroid objectives, use the constant acceleration model from \eqref{eq:genmod}.  
As a simple example, Figures~\ref{fig:traj} and \ref{fig:mobile_ell} show an example of the mobile target simulation in two dimensions.
We present the results of 50 simulations for the mobile target scenario, again subject to quantized noise from pixelation and again in three dimensions.  
All constants used in the mobile simulations are the same as the static simulations.

The top panel of Figure~\ref{fig:err_and_trace_mobile} shows the average error during the 50 simulations with mobile targets in three dimensions.
Because none of the targets stray far from the rest, the \emph{centroid objective} has a slight advantage over the supremum objective.
We also performed simulations with asymmetric data sets and outliers, which favored the supremum objective.  
Any nondecreasing properties in the top panel of Figure~\ref{fig:err_and_trace_mobile} are due to quantized observations.
The correlation coefficient between the time series representing the target error (top panel of Figure~\ref{fig:err_and_trace_mobile}) and that representing the traces of the covariance matrix sequence (bottom panel of Figure~\ref{fig:err_and_trace_mobile}) is 0.84 for the centroid objective and 0.87 for the supremum objective, showing that these proxies are reasonable for localization accuracy.
We also note that the flattening out of the objective function value, plotted in the bottom panel of Figure~\ref{fig:err_and_trace_mobile}, is due to the process noise covariance \eqref{eq:processnoise} preventing the KF-updated covariance from converging to the zero matrix.
This term prevents the covariance from converging to zero in the mobile target case, and instead holds it near the heuristic value given in \eqref{eq:processnoise}.
Overall localization accuracy could be further improved by \emph{a priori} knowledge of motion model, the on-line adaptive modeling of \cite{Li03_part1}, and using multiple observers, as in \cite{roumeliotis02}.

\section{Experiments}
\label{sec:exp}

\begin{figure}[t]
\centering
\includegraphics[width=0.7\columnwidth]{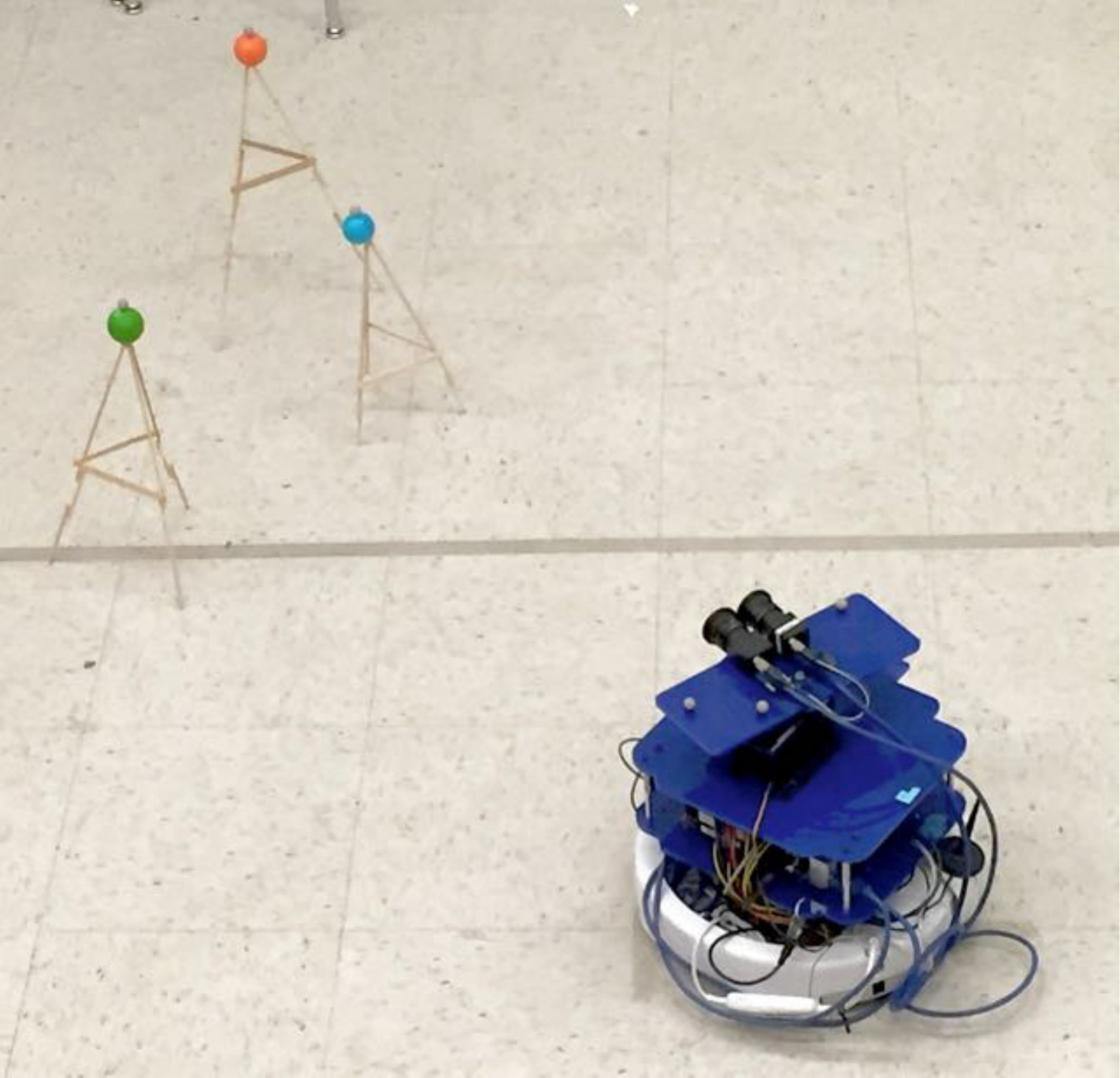}
\caption{Overhead photograph of the experimental setup
}
\label{fig:above}
\end{figure}

In this section, we present experiments using a single ground robot (iRobot Create\texttrademark), pictured in Figure~\ref{fig:above}, to localize a set of stationary targets, for which we used colored ping pong balls.
The robot carries a stereo rig with 4 cm baseline mounted atop a servo that can rotate the rig $\pm 180^\circ.$
The rig uses two Point Grey Flea3\texttrademark\, cameras with resolution $1280\times1024$.
To simulate long distance localization, all images are downsampled by a factor of 24 so that the effective resolution is $54\times43$, allowing us to operate with disparities at or below ten in our laboratory environment. 
The robot is equipped with an on-board computer with 8GB RAM and an Intel Core i5-3450S processor.
All image processing and triangulation is done on-board using C++ and run on Robot Operating System (ROS).
We used the Eigen library for mathematical operations and the OpenCV library for HSV color detection in our controller.

We use the \cite{bouguet2004camera} toolbox to calibrate the intrinsic and extrinsic parameters of the stereo rig offline. 
For self-localization, our laboratory is equipped with an OptiTrack\texttrademark\, array of infrared cameras that tracks reflective markers that are rigidly attached to the robot.
The robot is equipped with an 802.11n wireless network card, which it uses to retrieve its position and orientation by reading a ROS topic that is broadcast over wifi.
To evaluate the localization accuracy of our algorithm, in addition to saving the robot trajectories, we fix markers to the targets, and the motion capture system records their ground truth locations as well.
Finally, note that estimation takes place in three dimensions, whereas the experimental platform is a ground robot confined to the plane.
All navigation and waypoint tracking relies on a PID controller using the next waypoint, defined by the differential flow in \eqref{eq:rRdiff}, as the set point. 
In the experiment, robots generally came within 2 cm of their target waypoints. 
The servo is capable of orienting the stereo camera with accuracy of $\pm 1^\circ$ compared to the global controller.
No collision avoidance, aside from the implicit collision avoidance from the FoV constraints presented in Section~\ref{sec:fov}, is used in the implementation.

\subsection{Noise Modeling}
\label{sec:noise}
In this paper, we have assumed that pixel measurement errors are subject to a known zero mean Normal noise distribution with covariance $Q$.
The goal of this subsection is to ensure that this assumption is satisfied in practice.
In particular, we use training data to remove average bias in the pixel estimates and estimate $Q$ for our experimental setup.
This is critical for a variety of reasons:
\begin{itemize}
\item If the mean of the pixel measurements is biased, then the KF will not converge to the ground truth.
\item If $Q$ is an under-approximation to the actual covariance of random errors at the pixel level, then the KF will become inconsistent and will not converge to the ground truth, if it converges at all.
\item If our choice of $Q$ is too conservative or heuristic, it may not be informative enough to be useful in the decision process at the core of the controller.
\end{itemize}
We also want to test the system in relatively extreme conditions, particularly at long ranges (small disparities), where \cite{freundlich15cvpr} shows that triangulation error distributions are heavy tailed, biased away from zero, and highly asymmetric, which can exacerbate problems caused by calibration errors.

\begin{figure}[t]
\centering
\begin{tabular}{c c}
\includegraphics[height=4cm]{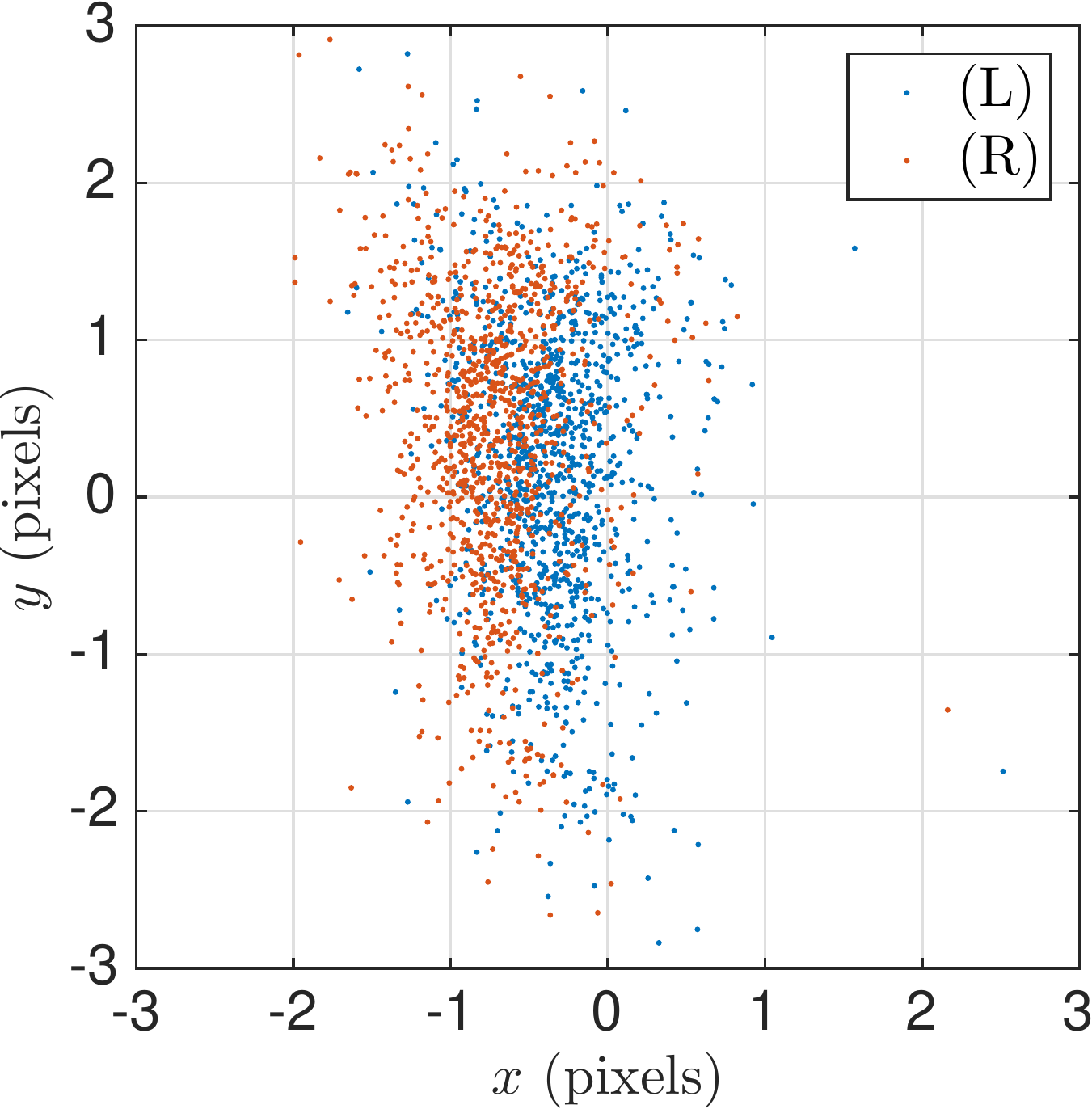}
&
\includegraphics[height=4cm]{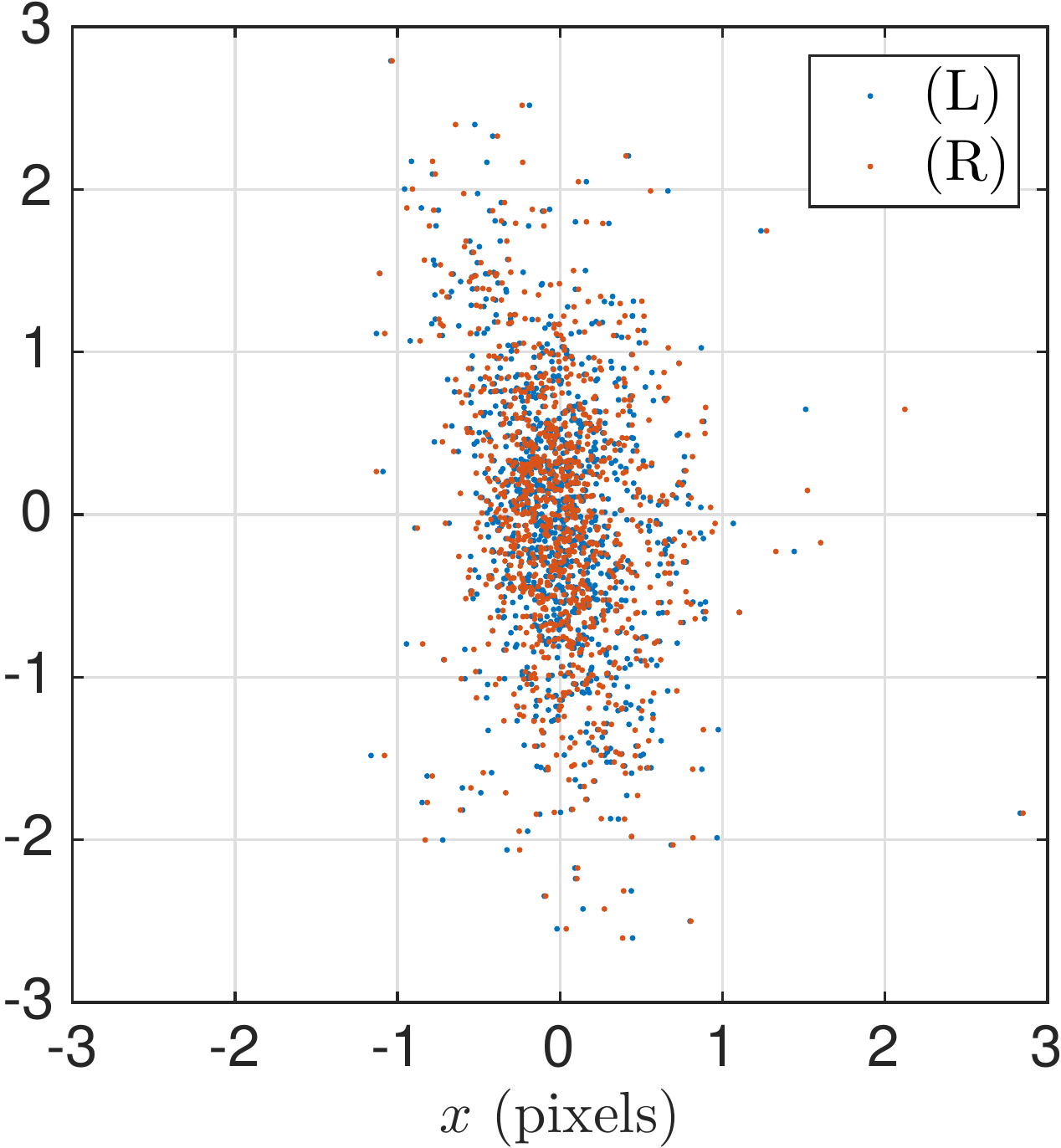}
\end{tabular}
\caption{Scatter plots of the residual errors  $\bbepsilon_\ell^\text{uc}$ (left panel) and $\bbepsilon_\ell$ (right panel) for the training data.
}
\label{fig:px_errors}
\end{figure}

To address these challenges, we adopt a data-driven approach using linear regression in the pixel domain.
Using a set of $n=600$ pairs of training images for the robot at various ranges and viewing angles, we obtain a regression that maps raw pixel observations $(x_L, x_R, y)$ to their best linear unbiased estimate $(x_L^c, x_R^c, y^c)$, hereafter referred to as the \emph{corrected} measurement.
To acquire training data for the regression, we project the motion capture target locations, i.e., the ground truth, onto the camera image sensors, using the mapping in \eqref{eq:invbbp}.
This yields $n$ individual output vectors $Y_\ell$ for $\ell = 1, \dots, n,$ which we stack into an $n \times 3$ matrix of outputs $Y$.
We also use a color detector (the same detector that is used in the experiments) to obtain $n$ raw pixel observations.
We then compute five features and, because the data are not centered, include one constant, for each raw pixel tuple according to the model
\begin{equation}\label{eq:model}
X_\ell = \left[ 
1,\,  y_\ell,\, d_\ell,\,  x_{{L, \ell}}+  x_{{R, \ell}},\,  y  d_\ell, \frac{  x_{{L, \ell}}+  x_{{R, \ell}}}{d_\ell}
\right],
\end{equation} 
where $d_\ell =x_{{L, \ell}} -  x_{{R, \ell}}.$
Stacking the $X_\ell$ into an $n \times 6$ matrix, we have a linear model $Y = X \bbbeta + \bbepsilon,$ where $\bbbeta$ is a $6 \times 3$ matrix of coefficients and $\bbepsilon$ is an $n \times 3$ matrix of errors.
We refer to the raw pixels as \emph{uncorrected}.
The associated error vectors (computed with respect to the uncorrected pixels and the projected ground truth) $\bbepsilon_\ell^\text{uc}$ for $\ell = 1, \dots, n$ are plotted in the left panel of Figure~\ref{fig:px_errors}.
In the scatter plot it can be seen that the mean error is nonzero, contributing average bias to individual measurements.
Also note the apparent skew of the error distribution in the vertical ($y$) direction.

Using the model with the feature vector described in \eqref{eq:model} and applying the ordinary least squares estimator, the maximum likelihood estimate of the coefficient matrix is $\hat{\bbbeta} = (X^\top X)^{-1} X^\top Y.$
Using $\hat{\bbbeta},$ the residual covariance in the pixel measurements we obtained is
\agn*{
Q= \mat{ 
0.1297 &0.1267 &-0.0882 \\
0.1267 &0.1355 &-0.0819\\ 
-0.0882 &-0.0819 &0.6988 
 }.
    }
Note that the standard deviation of the $y$ pixel value, corresponding to the variances in the lower right entry of the above matrix, corresponds to errors in the height of the ping pong ball center in vertical world-coordinates.
The right panel of Fig.~\ref{fig:px_errors} shows the residual errors in the training set $\bbepsilon_\ell$ for $\ell = 1, \dots, n$ for the corrected vector $X \hat{\bbbeta}.$

\begin{figure}[t]
\centering
\begin{tabular}{c}
\includegraphics[width=8cm]{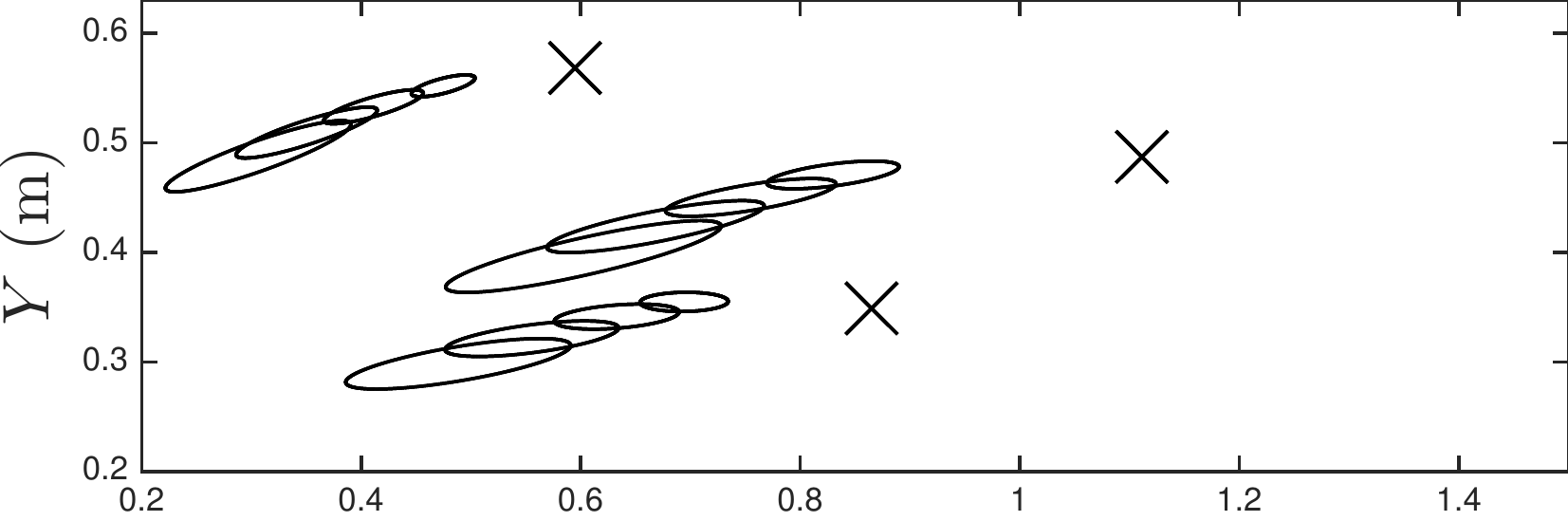}
\\
\includegraphics[width=8cm]{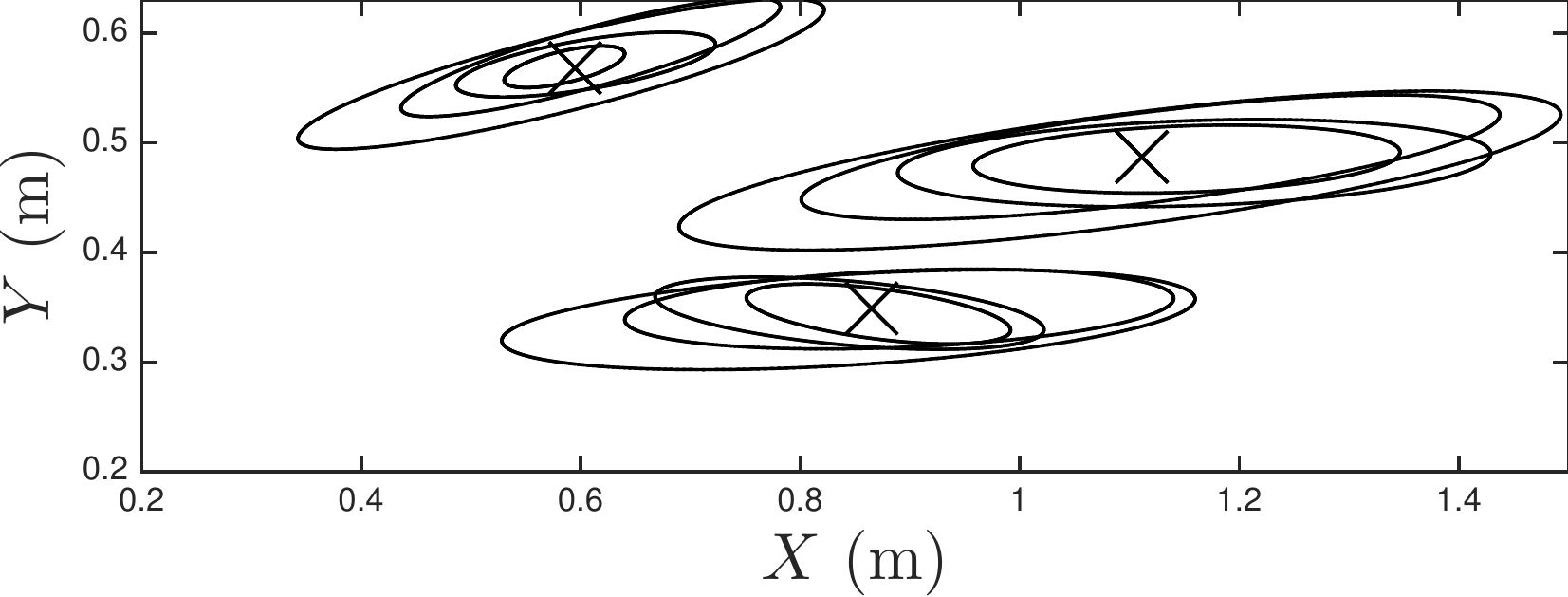}
\end{tabular}
\caption{Projecting the Kalman Filtered 95\% confidence ellipses onto the $X$-$Y$ (ground) plane using the raw/uncorrected (top panel) and corrected (bottom panel) pixel observations.
The $\times$'s denote the true target locations.
The data used to generate these plots were obtained during experimental trials on unseen data.
Projections onto the $X$-$Z$ and $Y$-$Z$ planes gave similar results.}
\label{fig:ellipses}
\end{figure}

To use the learned model online, new raw observations $(x_{L}, x_{R}, y)$ are converted to corrected pixels $(x_L^c, x_R^c, y^c)$ based on the associated new feature vector and $ \hat{\bbbeta}$.
Then, the robot triangulates the relative location of the target via \eqref{eq:bbp} using the corrected pixels, propagates $Q$ via the Jacobian, rotates the covariances, and finally translates the estimates to global coordinates.
Fig.~\ref{fig:ellipses} compares the projection of Kalman Filtered 95\% confidence ellipses onto the $X$-$Y$ (ground) plane using the raw/uncorrected and corrected pixel observations on data that was acquired during the experimental trials.
To generate the plot in the top panel, which corresponds to the result if the raw pixels are used, we computed the empirical covariance of the raw residual errors $\bbepsilon_\ell^\text{uc}$ for $\ell = 1, \dots, n$.

\subsection{Results}

\begin{figure}[t]
\centering
\begin{tabular}{c}
\includegraphics[width=0.8\columnwidth]{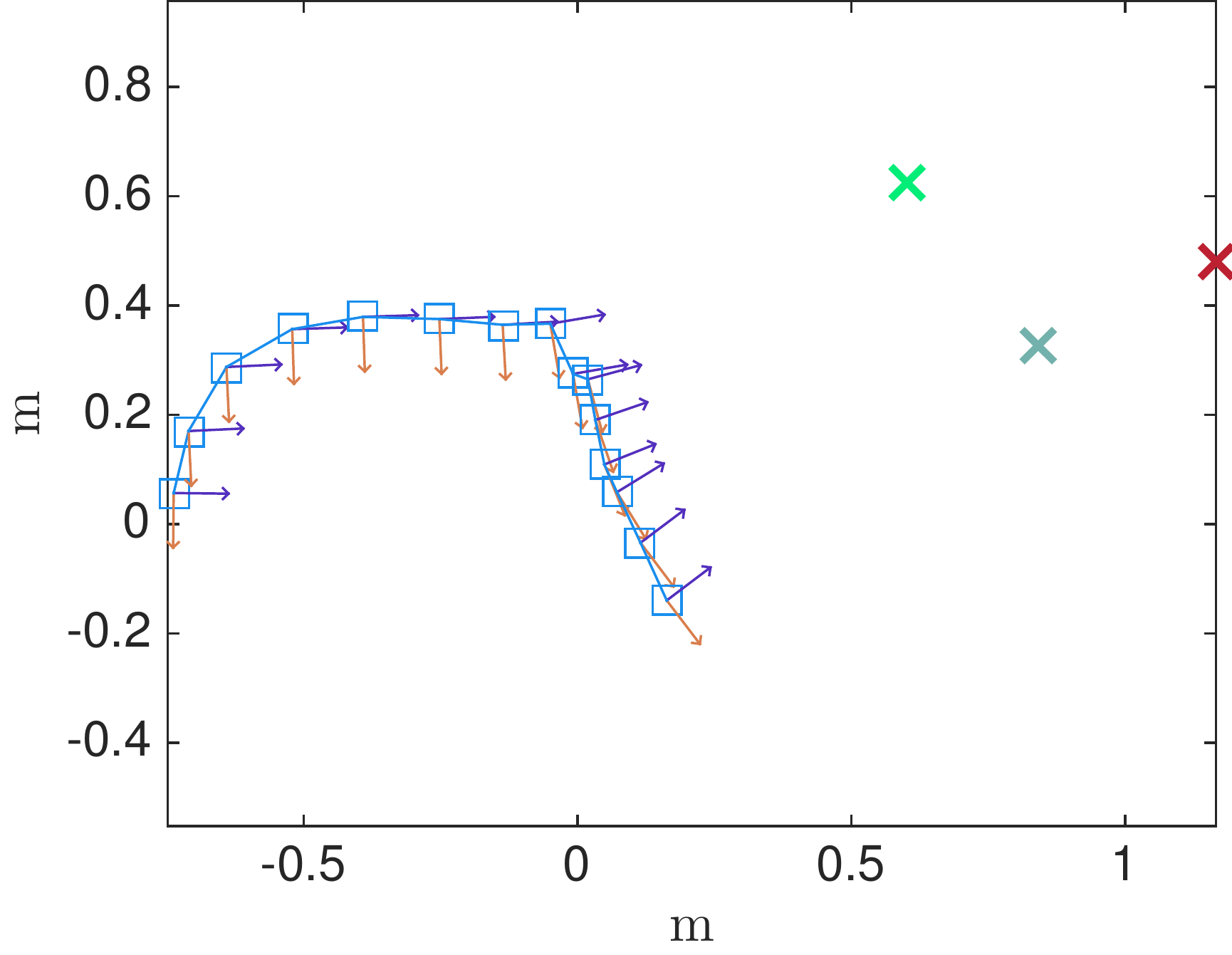} \\
(a) \\
\includegraphics[width=0.8\columnwidth]{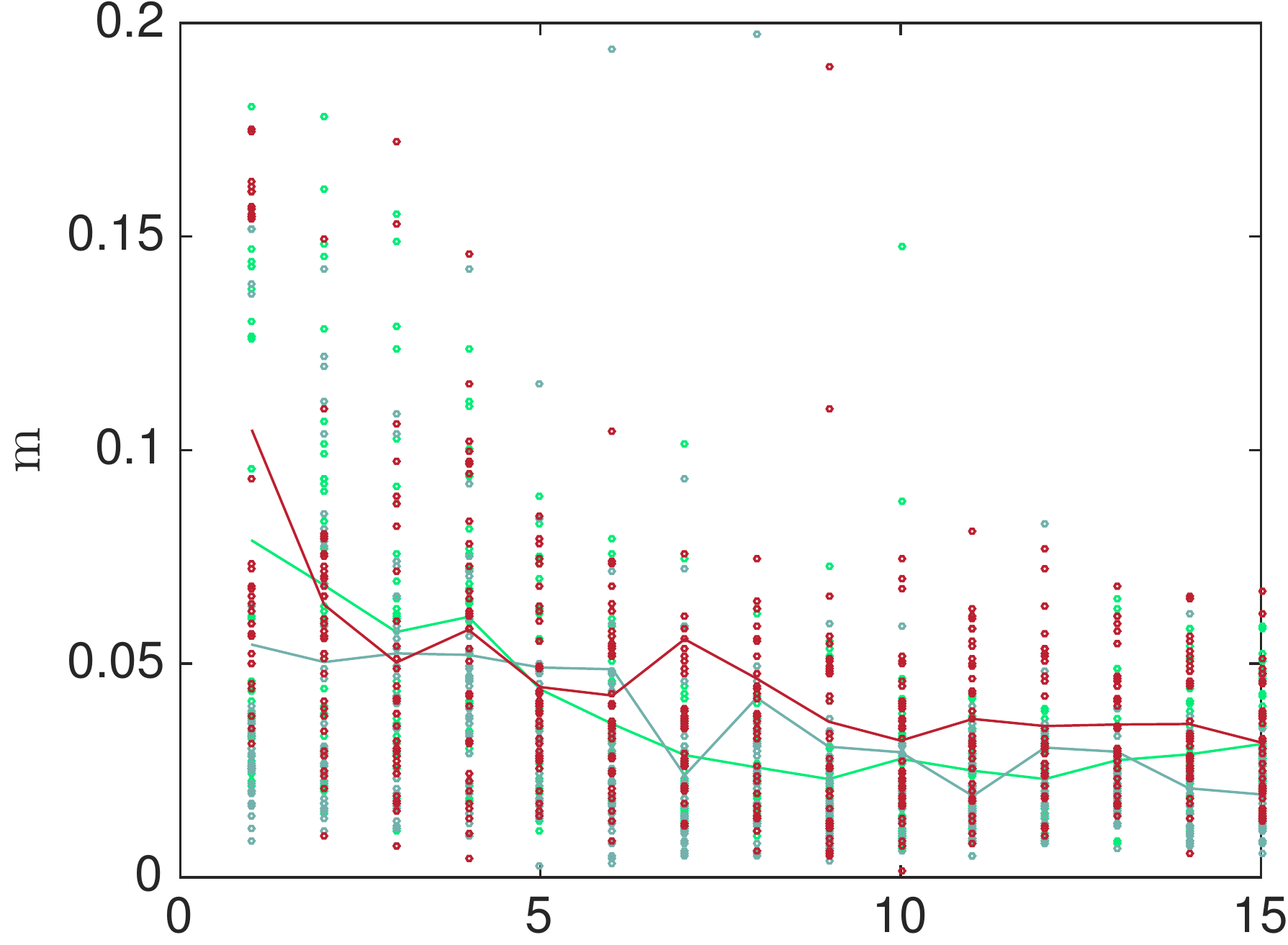} \\
(b) \\
\includegraphics[width=0.8\columnwidth]{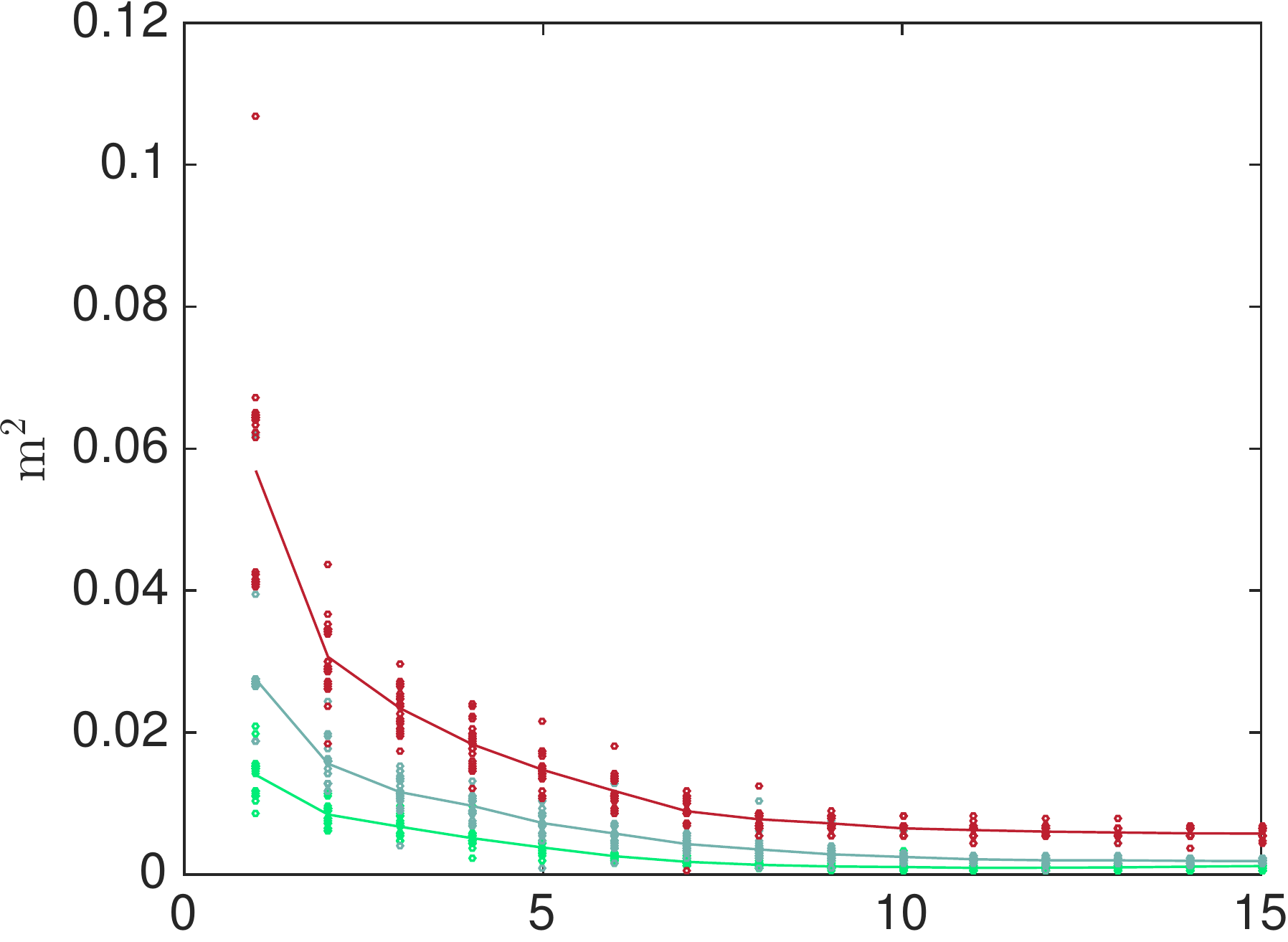} \\
(c)
\end{tabular}
\caption{
Plotting the trajectory of the robot along with the locations of the three static targets for one of the thirty experiments using the supremum objectives.
The blue line represents the position and the $\square$'s are the locations from where an image was taken.
The orientation of the robot (projected onto the plane) is represented at each imaging location by a set of orthogonal axes.
Scatter plots of the filtered error and the trace of the filtered error covariance in all targets for all thirty experiments are also shown.
Each target is plotted using a unique color corresponding to the $\times$.
In the scatter plots, colors correspond to (a), and a line is drawn to guide the eye through the means for each target in the experiment.
The horizontal axes in these plots are the number of images taken.
}
\label{fig:exptraj-sup}
\end{figure}

\begin{figure}[t]
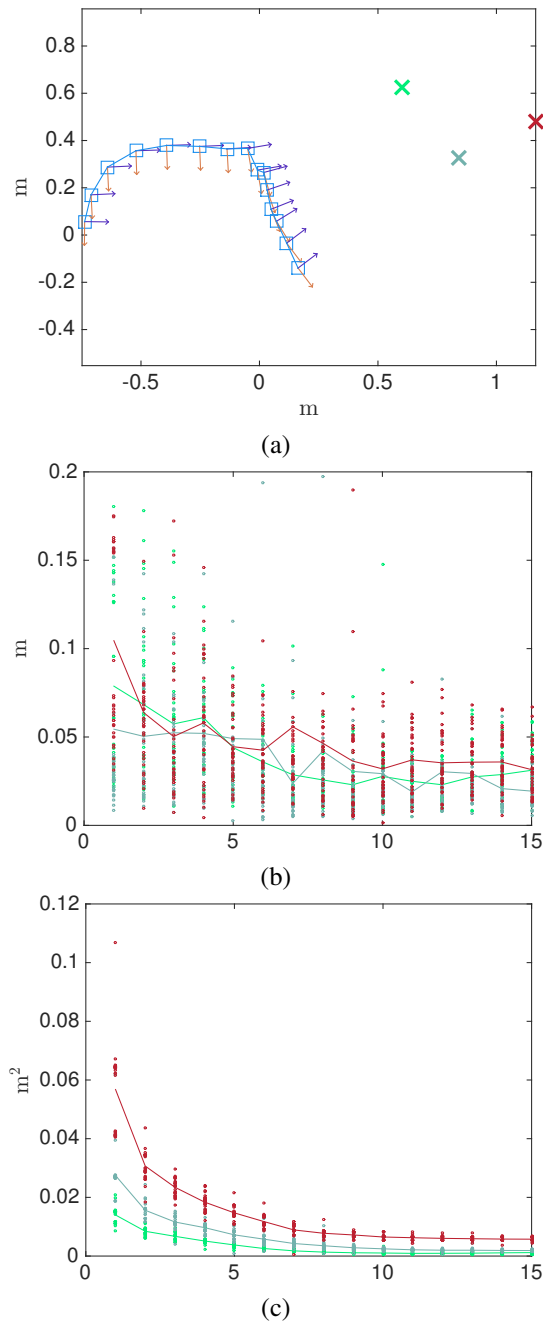

\centering
\begin{tabular}{c}
\includegraphics[width=0.8\columnwidth]{traj-nbv-sup.pdf} \\
(a) \\
\includegraphics[width=0.8\columnwidth]{scat-err-sup.pdf} \\
(b) \\
\includegraphics[width=0.8\columnwidth]{scat-trace-sup.pdf} \\
(c)
\end{tabular}
\caption{
Identical plots to Fig. \ref{fig:exptraj-sup}, however the data reported correspond to the centroid objective experiments.
}
\label{fig:exptraj-cen}
\end{figure}

We conducted sixty total static localization experiments -- thirty using the supremum objective and thirty using the centroid objective.
Figures~\ref{fig:exptraj-sup} and \ref{fig:exptraj-cen} (a) show paths followed by the robot during the experimental trials using the setup shown in Figure~\ref{fig:above}.

Figures~\ref{fig:exptraj-sup} and \ref{fig:exptraj-cen} (b) and show scatter plots of the errors from all thirty experiments using each control objective.
Each point in these plots represents the Euclidean distance between filtered estimates and ground truth locations of the ping pong balls provided by the motion capture system.
In each experiment, we collected fifteen images, and in each iteration three targets were observed.
Accordingly, the plots have fifteen bands, each with thirty total points, representing the filtered error in a particular target for a particular experiment.
The mean error for each target across all thirty experiments is drawn on the plot to guide the eye though the scatter plots.

Figures~\ref{fig:exptraj-sup} and \ref{fig:exptraj-cen} (b) reveal the presence of outliers in the localization.
Note from the figure that the KF still converges to ground truth.
We can also see that the overall spread of the bands in the scatter plots is decreasing, reflecting that the control objective is indeed minimized.
On average, the error in each target was reduced by about half, which is less of a reduction than what was observed in simulation.
One reason for this, aside from the presence of unmodeled noise, is the fact that our lab has only about four square meters of usable area, so the diversity of viewpoints is not as rich as in the simulations.

Figures~\ref{fig:exptraj-sup} and \ref{fig:exptraj-cen} (c) show the trace of the filtered error covariance for the same data that was used to plot Figures~\ref{fig:exptraj-sup} and \ref{fig:exptraj-cen} (b).
The points in the scatter plots reflect the posterior variance of each target for each simulation, and again the mean over the thirty experiments using each control objective is drawn on the plot to guide the eye.

\subsection{Comparison to existing heuristic methods that use discrete pose space}

\begin{figure*}[t]
\centering
	\begin{tabular}{c c}
		\includegraphics[width=0.6\columnwidth]{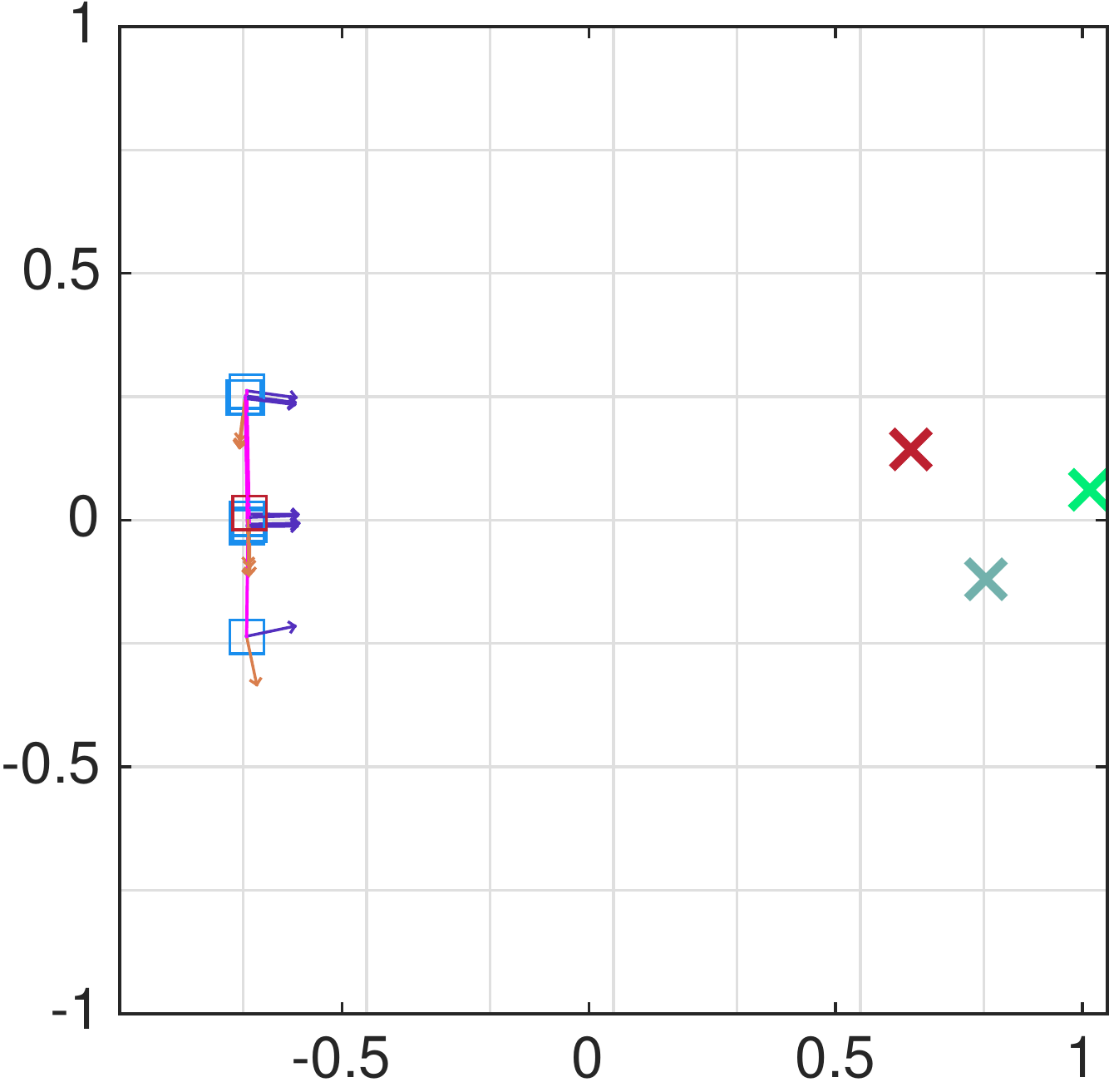} \hspace{1cm} & \hspace{1cm} \includegraphics[width=0.6\columnwidth]{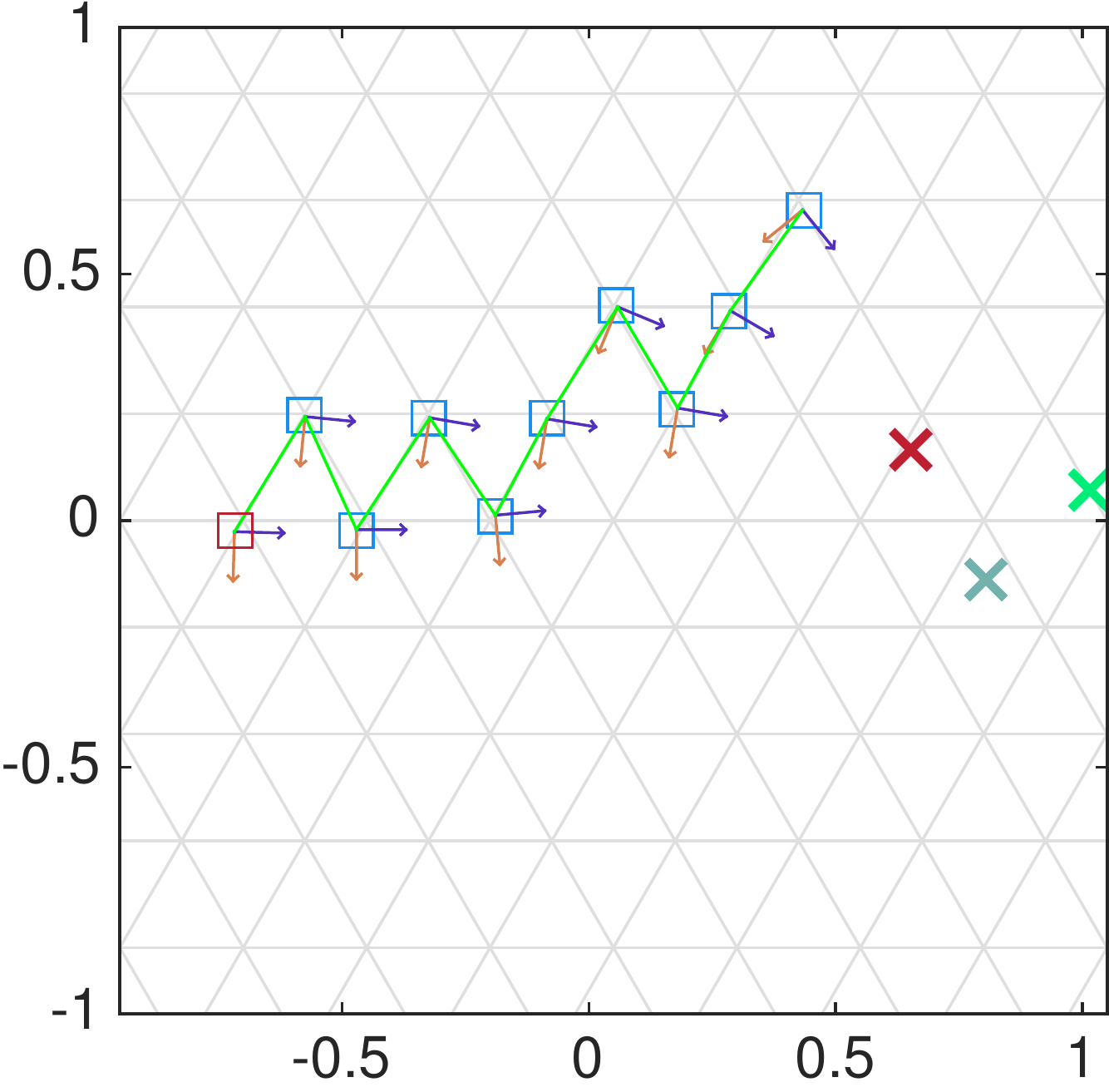}
		\\
		(a) \hspace{1cm}&\hspace{1cm} (b)
		\\
		\includegraphics[width=0.6\columnwidth]{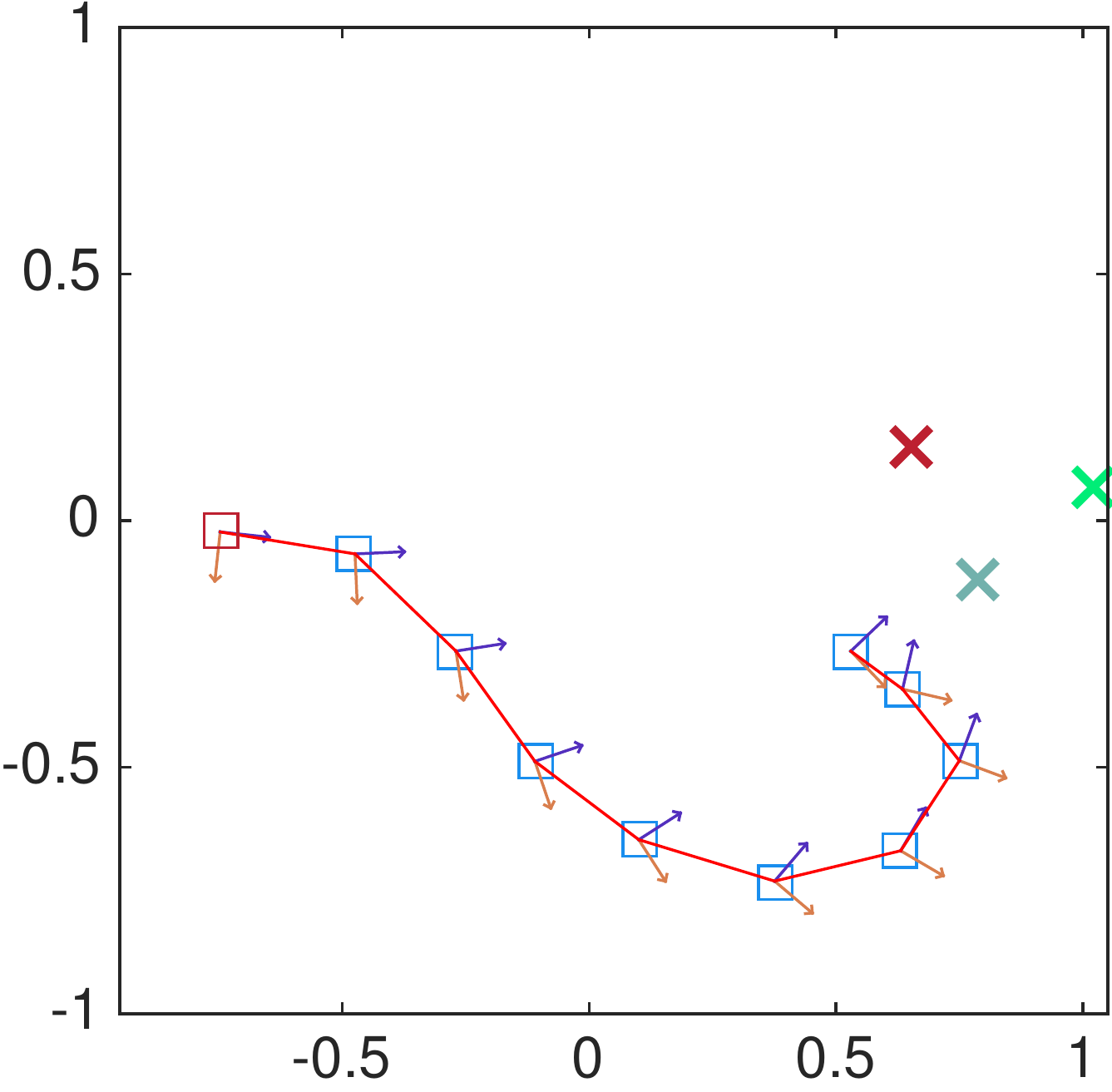} \hspace{1cm} & \hspace{1cm} \includegraphics[width=0.6\columnwidth]{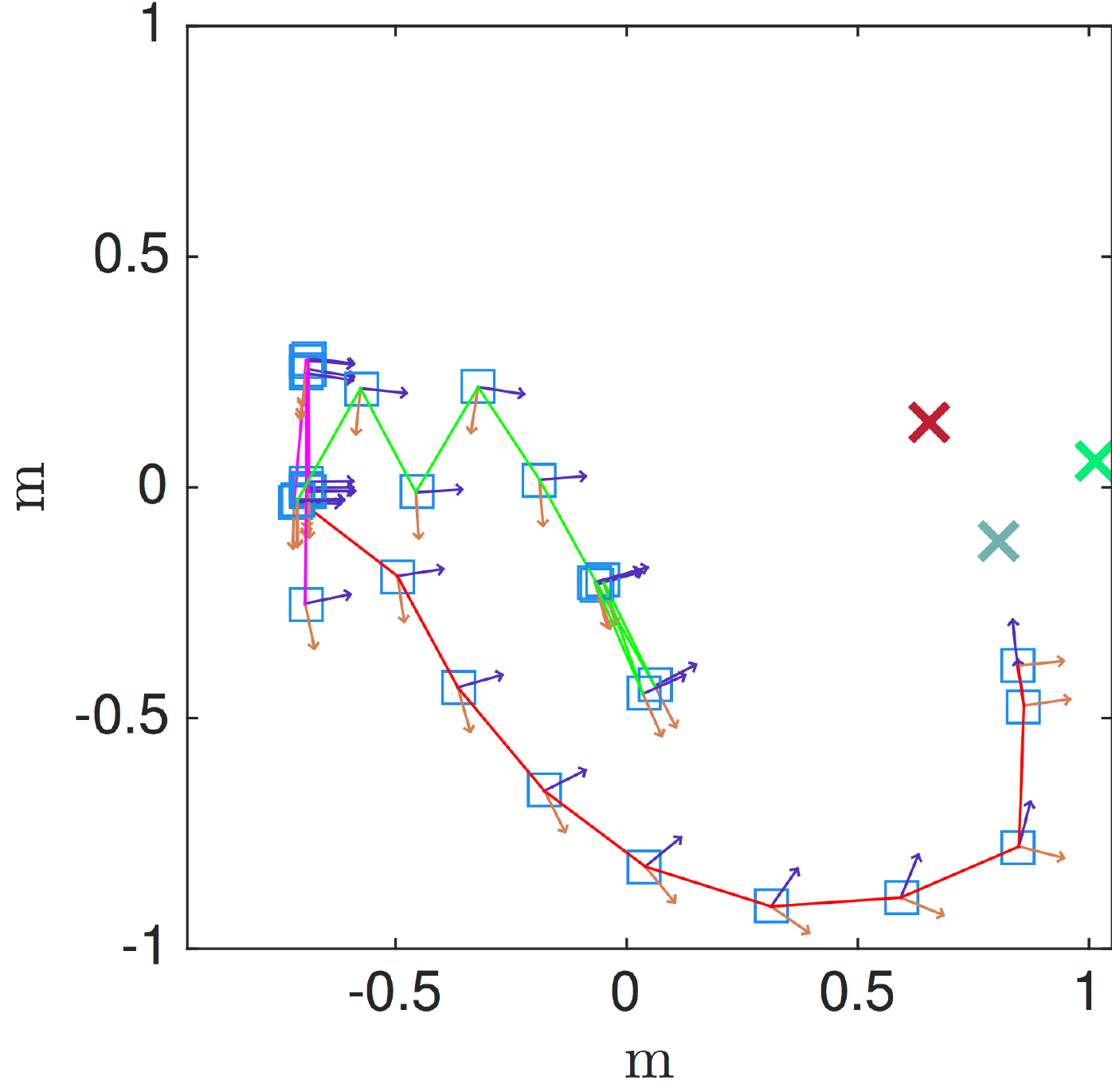}
		\\
		(c) \hspace{1cm}&\hspace{1cm} (d)
	\end{tabular}
\caption{
Exaples of trajectories generated from heuristic comparisons on a 0.25m grid size to our method.
From the top: the heuristic method for the square grid (a), 
triangular grid (b), and
the proposed supremum objective (c). 
Red squares are the initial poses of the stereo rig during the experiments. 
Selected trajectories of the heuristic and the proposed method that contain interesting motion artifacts are shown in (d).
}
\label{fig:exptraj-sup-comp}
\end{figure*}

We conducted static target localization experiments to compare the localization performance of our NBV method to a heuristic method that employs a discretization of the pose space, similar to the approaches discussed in \cite{dunn_iros09, wenhardt07,Wenhardt06}. 
Specifically, the heuristic we implement is based on discretizing the stereo rig's pose space, calculating the objective value in Problem \ref{problem} at all possible next poses and choosing the one having minimum objective value.
This approach is in line with every stereo camera-based approach that we are aware of, in that they all (except \cite{ponda09}, which we have discussed in the introduction) select from discrete next view sets. Note that we cannot fairly compare the localization accuracy of a single camera to a stereo rig (the rig always wins), and we can not compare a stereo rig to a LiDAR system (the LiDAR always wins, assuming that data associations can be established).

In our experiments we focus on the supremum objective, so that the objective value calculated by the heuristic method is the trace of the filtered covariance matrix of the worst localized target. We tested two different ways of discretizing the pose space, namely, a square grid and a triangular grid, as shown in 
Figures~\ref{fig:exptraj-sup-comp} (a)-(b). 
In all experiments, the robot started from the same pose. Moreover, we required that both our NBV method and the heuristic travel approximately the same amount of distance and take the same number of images. In this way, different trajectories can be compared in terms of their ability to localize the targets. This requirement also specifies the edge length for the square and triangular grids.
In this experiment, we set the total number of images that each method can take equal to ten and the edges of the square and the equilateral triangle cells were both set to be 0.25m. At each node of the grid, the stereo rig is oriented towards the estimated position of the worst localized target. This is also the behavior achieved by the NBV method with the supremum objective. We ran each method twenty times and below present our results.

\begin{figure}[t]
\centering
	\begin{tabular}{c}
	\includegraphics[width=0.7\columnwidth]{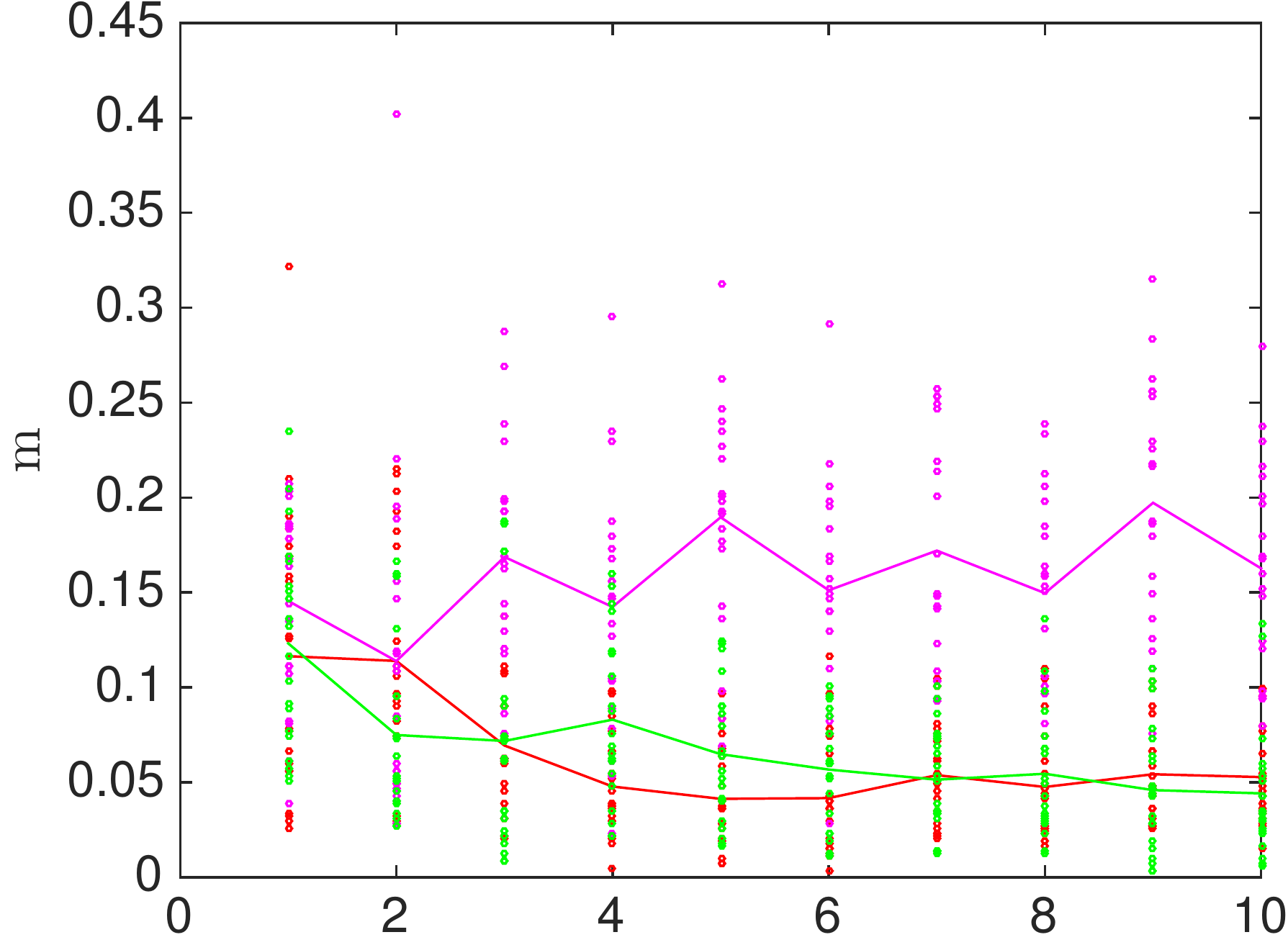} \\
	(a) \\
	\includegraphics[width=0.7\columnwidth]{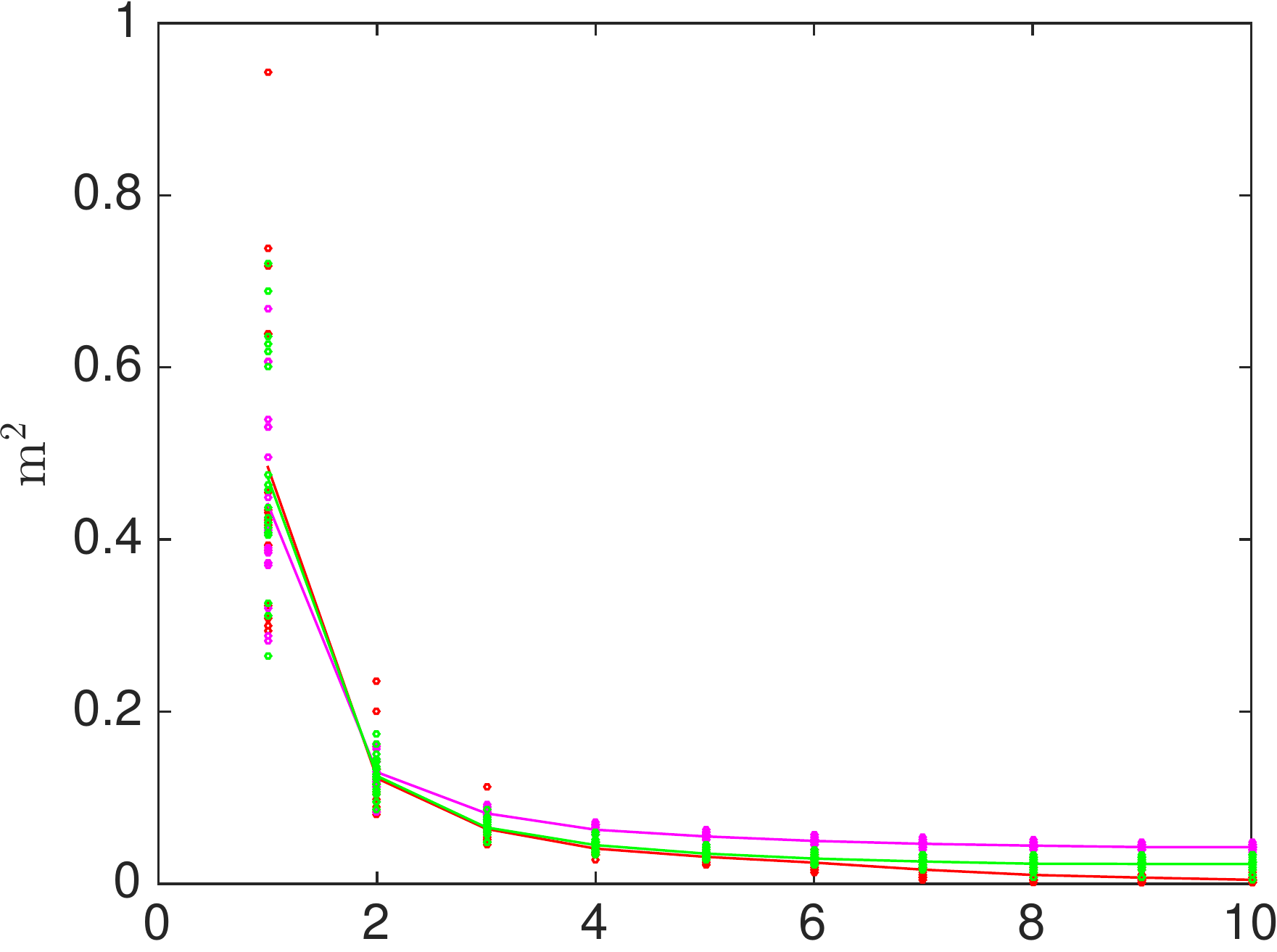} \\
	(b)
	\end{tabular}
\caption{
Filtered localization error (a) and the trace of the error covariance (b) of the green target from Fig.~\ref{fig:exptraj-sup-comp}, averaged over twenty trials for each one of the three methods.
Red corresponds to the proposed NBV method, magenta to the heuristic method for the square grid, and green to the triangular grid heuristic.
}
\label{fig:exptraj-sup-comp-results}
\end{figure}

Figure~\ref{fig:exptraj-sup-comp} (a)-(c) shows sample paths followed by the robot using the heuristic method for the {}two different grids and the NBV approach, respectively, for one of the twenty trials.
To take ten images, the heuristic method that uses the square grid travels an average distance of 2.3058m, the heuristic method that uses the triangular grid travels 2.2198m on average, and our proposed method travels 2.1949m on average.
We note that the heuristic method is highly sensitive to the grid size and error covariance matrices during the measuring process, for both types of grids.
Specifically, during twenty trials of experiments with grid size of 0.25m, the heuristic generated trajectories that contain small cycles (back and forth motion among a few cells) for both grid types; see, e.g., Figure \ref{fig:exptraj-sup-comp} (a) and (d).
In fact, we were unable to find a grid size that does not generate such motion artifacts for the square grid heuristic; in every single trial and for every grid size we observed a behavior similar to the one shown in Figure \ref{fig:exptraj-sup-comp} (a). On the other hand, after much trial and error we found that, for the particular set up of targets in our lab, a 0.25m grid size can produce reasonable trajectories for the triangular grid heuristic, as shown in Figure~\ref{fig:exptraj-sup-comp} (b). Nevertheless, this behavior was not consistent, as seen in Figure \ref{fig:exptraj-sup-comp} (d) for the same grid size. 
Our continuous space NBV controller, shown in Figure~\ref{fig:exptraj-sup-comp} (c), selects the next pose in a continuous pose space and automatically balances the strategies between varying viewing angle and approaching targets. 
Figures \ref{fig:exptraj-sup-comp-results} (a)-(b) demonstrate the localization performance of the NBV method compared to the heuristic method. 
Figure \ref{fig:exptraj-sup-comp-results} (a) shows the filtered localization error during each one of the ten iterations (after each image was taken), averaged over the twenty trials.
The localization error of the heuristic method for the square grid (magenta line)
eventually diverges due to measurement bias that causes the KF to diverge. 
This is the result of observing the targets from the same position. 
A similar behavior was also observed for the straight baseline in the simulations; see Section \ref{sec:subsec_stat_target_local}. 
When the grid edge length is chosen as 0.25m, the heuristic method for the triangular grid (green line) achieves similar localization error as our NBV method (red line).
Figure \ref{fig:exptraj-sup-comp-results} (b) shows the trace of filtered error covariance for the heuristic and the NBV method, averaged over the twenty trials.
In this case, the NBV method outperforms the heuristic for both grid types.
While the heuristic confined to the square performs extremely poorly because it does not approach the targets, the heuristic method on the triangular grid, while slightly better, still does not perform as well as the proposed continuous space method.
Finally, note that the grid size of 0.25m was selected after laborious tuning to remove such artifacts, suggesting that our continuous method will perform better in general situations than the discrete pose space alternatives.

\section{Conclusions}\label{sec_conclusions}
In this paper, we addressed the multi-target, single-sensor problem employing the most realistic sensor model in the literature. 
Our approach relies on a novel control decomposition in the relative camera frame and the global frame. 
In the relative frame, we modeled quantization noise and did not operate under a Gaussian noise assumption at the pixel level (as range/bearing models assume).
Our approach avoids setting covariance by deriving $\Sigma$ from the uniform distribution. 
This allows us to obtain the Next Best View from where the targets can be observed in order to minimize their localization uncertainty.  
We obtain this NBV using gradient descent on appropriately defined potentials, without sampling the pose space or having to select from a set of previously recorded image pairs.
Compared to previous gradient-based approaches, our integrated hybrid system is more precise since it derives Gaussian parameters from the quantization noise in the images.
Furthermore, our approach does not assume omnidirectional sensors, but instead imposes field of view constraints.

\bibliographystyle{IEEEtran}
\bibliography{charlie-refs}

\end{document}